\documentclass{article}

\PassOptionsToPackage{numbers, compress}{natbib}
\usepackage[main, final]{neurips_2026}

\usepackage[utf8]{inputenc}
\usepackage[T1]{fontenc}
\usepackage{microtype}
\usepackage{graphicx}
\usepackage{wrapfig}
\usepackage{booktabs}
\usepackage{multirow}
\usepackage{tikz}
\usepackage{threeparttable}
\usepackage{tikz-cd,mathtools}
\usepackage{mathtools}
\usepackage{subcaption}
\usetikzlibrary{arrows, matrix} 
\usepackage{diagbox}
\usepackage{colortbl}
\usepackage{tcolorbox}
\usepackage{etoolbox}
\usepackage{hyperref}
\usepackage{url}
\usepackage{booktabs}
\usepackage{amsfonts}
\usepackage{amsmath}
\usepackage{amssymb}
\usepackage{amsthm}
\usepackage{nicefrac}
\usepackage{microtype}
\usepackage{xcolor}
\usepackage{graphicx}
\usepackage{mathtools}
\usepackage{enumitem}
\usepackage{bbm}
\usepackage{amsthm}
\usepackage{algorithm}
\usepackage{algpseudocode}
\usepackage{marvosym}
\DeclareRobustCommand{\corr}{\textsuperscript{\textrm{\Letter}}}

\title{Neural Statistical Functions}

\author{
  Daniel Xu$^1$,
  Yuxin Xie$^2$,
  Minghao Guo$^2$\corr,
  Haixu Wu$^2$\corr,
  Wojciech Matusik$^2$\\
  $^1$Columbia University, $^2$MIT CSAIL \\
  {\small \texttt{xu.daniel@columbia.edu, yxie25@mit.edu, \{guomh2014,wuhaixu98\}@gmail.com}},\\ {\texttt{wojciech@csail.mit.edu}}
}

\newcommand{\R}{\mathbb{R}}
\newcommand{\E}{\mathbb{E}}
\newcommand{\1}{\mathbbm{1}}
\newcommand{\sg}{\mathrm{sg}}
\newcommand{\dd}{\mathrm{d}}

\newcommand{\cdf}{Q_\mu}
\newcommand{\icdf}{Q_\mu^{-1}}

\newtheorem{theorem}{Theorem}
\newtheorem{proposition}[theorem]{Proposition}

\newtheorem{remark}[theorem]{Remark}

\begin{document}

\maketitle

\begin{abstract}
Classical deep learning typically operates on individual cases. Despite its success, real-world usage often requires repeated inference to estimate statistical quantities for complex decision-making tasks involving uncertainty or extreme-value analysis, resulting in substantial latency. We introduce \emph{neural statistical functions}, a new family of models learned from pre-trained single-sample predictors and scattered data samples, which can directly infer statistics over continuous operating condition ranges without explicit sampling. By introducing the notion of \emph{prefix statistics}, we transform and unify diverse statistical functions (\emph{e.g.}, integrals, quantiles, and maxima) into an interval-conditional framework, in which a principled identity between the prefix statistics and the individual-case regression serves as the learning objective. Neural statistical functions achieve strong performance in estimating essential statistics of complex physical processes, including accumulated energy in dynamical systems, quantiles of aerodynamic responses, and maximum stress in crash processes, while achieving up to a 100$\times$ reduction in model evaluations.
\end{abstract}

\section{Introduction}

Deep learning has become widely used to build neural surrogates for expensive or complex forward systems~\cite{achiam2023gpt,bi2023accurate,wu2026GeoPT,jumper2021highly,wang2023scientific}. For example, in physics simulation, given a geometry design together with an operating condition, the neural surrogate
predicts the physical response, \emph{e.g.},~pressure, stress, or energy~\cite{nabian2025automotive,wu2024Transolver,wu2026GeoPT}. While neural surrogates have substantially accelerated scientific discovery and engineering~\cite{wu2024Transolver,luotransolver++,alkin2025abupt}, the standard interface remains single-condition prediction: the model is queried at one specified condition and returns the corresponding response. However, many deployment-time queries are not single-condition predictions. In design, planning, and reliability analysis, the required output is often a statistic of the response over a range of possible conditions: accumulated cost over a time window, failure probabilities over uncertain loads, or maximum stress over impact angles. 
A standard approach is Monte Carlo sampling or dense sweeping: repeatedly evaluate the simulator or its neural surrogate at many conditions, then compute the desired statistic, which can become prohibitively expensive \cite{altair_physicsai,ansys_simai} for numerical solvers and remain non-negligible for neural surrogates that output large spatial fields \cite{zhou2026transolver}.

\begin{wrapfigure}{r}{0.28\textwidth}
\vspace{-35pt}
\begin{center}
\centerline{\includegraphics[width=0.26\textwidth]{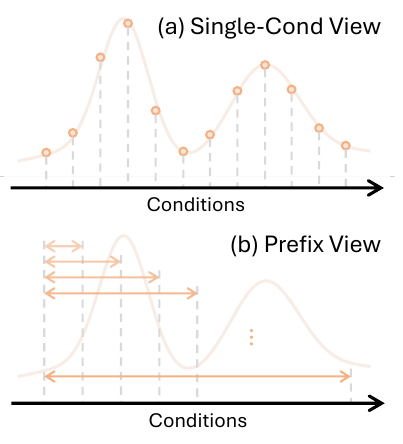}}
\vspace{-5pt}
	\caption{\small{This paper adopts a new \emph{prefix view} for statistics.}}
	\label{fig:intro}
\end{center}
\vspace{-20pt}
\end{wrapfigure}

Consider industrial crash-test design as an example. Let $\mathbf{x}\in\R^{N\times d}$ denote the spatial, geometric, or state representation at which a quantity of interest is evaluated (\emph{e.g.}, geometry of designed shapes), $c$ represents the operating conditions (\emph{e.g.},~impact angle during crash), and $\boldsymbol{h}(\mathbf{x},c)$ is the targeted quantity (\emph{e.g.},~inner stress). The standard approach to computing statistical properties $\boldsymbol{h}_{\text{Stat}}(\cdot,\cdot)$ within the varying condition interval $[c_0, c_1]$ can be formalized as:
\begin{equation}\label{equ:define}
    \boldsymbol{h}_{\text{Stat}}(\mathbf{x},[c_0,c_1])=\operatorname{Stat}\big(\{\boldsymbol{h}(\mathbf{x},c);c\sim [c_0, c_1]\subseteq [c_\text{min}, c_\text{max}]\}\big),
\end{equation}
where $\operatorname{Stat}(\cdot)$ is the targeted \emph{statistical function} and may denote an integral, quantile, maximum, or other statistic. The above formalization is naturally compatible with existing single-condition inference conventions shown in Figure \ref{fig:intro}(a), where we can learn a neural network $\boldsymbol{h}_{\theta}\approx \boldsymbol{h}$ for single-sample regression \cite{li2021fourier,wu2024Transolver}. However, such a single-case view incurs high computation cost due to multiple inferences and cannot easily adapt to changes in the condition interval, which requires a new round of sampling.

\begin{figure}[t]
\begin{center}
\centerline{\includegraphics[width=\textwidth]{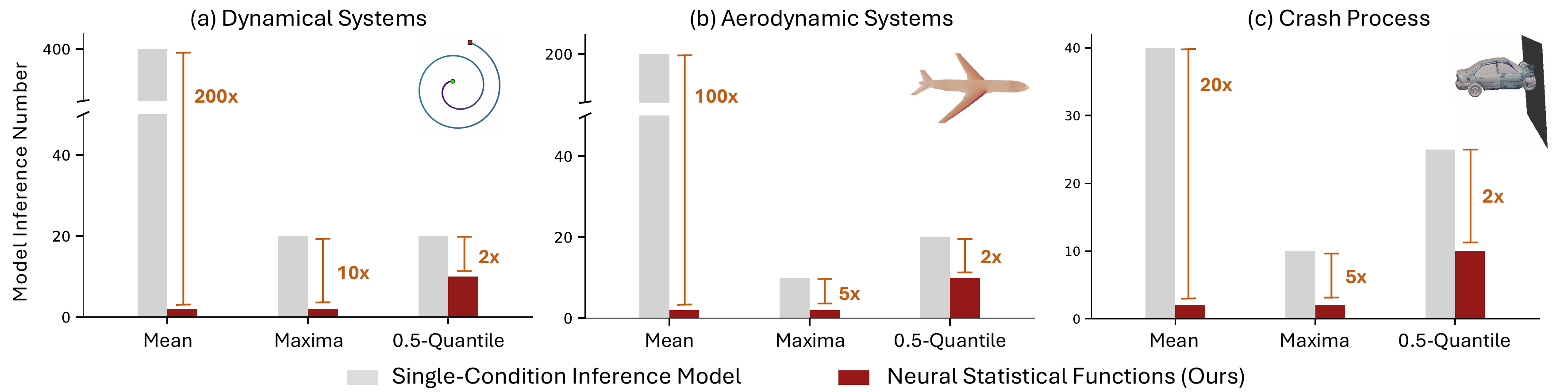}}
	\caption{{Number of model inferences to achieve comparable accuracy under three experimental scenarios. Compared to the single-condition inference model, neural statistical functions do not rely on unwieldy sampling operations, thereby significantly reducing the number of model inferences.}
    }
	\label{fig:teaser}
\end{center}
\vspace{-20pt}
\end{figure}

We depart from this convention by adopting a new perspective on the varying-condition regression task, which we call the \emph{prefix} view. As demonstrated in Figure \ref{fig:intro}(b), rather than learning the specific quantity under a single operating condition $c$, we propose to learn the neural network to approximate the \emph{prefix statistics} $\tilde{\boldsymbol{h}}(\mathbf{x},\cdot)$. Taking integral estimation as an example, let $\tilde{\boldsymbol{h}}(\mathbf{x},c)$ denote the accumulated response from $c_{\min}$ to $c$. Then the integral over any interval $[c_0,c_1]$ can be recovered by simple subtraction, which can be formally described as:
\begin{equation}\label{equ:integral}
    \boldsymbol{h}_{\text{integral}}(\mathbf{x},[c_0,c_1])=\tilde{\boldsymbol{h}}_{\text{integral}}(\mathbf{x},c_1)-\tilde{\boldsymbol{h}}_{{\text{integral}}}(\mathbf{x},c_0),\ \text{where}\ \tilde{\boldsymbol{h}}_{{\text{integral}}}(\mathbf{x},c)=\int_{c_\text{min}}^c \boldsymbol{h}(\mathbf{x}, u)\mathrm{d}u.
\end{equation}
This new parameterization frees statistics estimation from unwieldy sampling operations by leveraging the non-linear approximation capacity of neural networks to fit $\tilde{\boldsymbol{h}}(\cdot,\cdot)$, which leads to a family of deep learning models we call \emph{neural statistical functions}.

Introducing the prefix view does not, however, by itself, ensure successful training and adoption of neural statistical functions for general statistical quantities. In addition to introducing this new type of neural network, we also construct the whole training and inference pipeline. \textbf{1)} Regarding inference generalizability, although the prefix view is natural for integral, it is non-trivial to extend this idea to more complex statistics, \emph{e.g.}, quantile and maxima, whose underlying statistical functions are not additive over intervals. Through a set of principled mathematical transformations, we demonstrate that these widely used statistics can be accurately calculated or approximated based on the prefix integral for a transformed function, which enables the general use of neural statistical functions. \textbf{2)} Regarding training efficiency, rather than directly using the prefix integral computed from ground-truth data as supervision, which may also be extremely time-consuming and biased due to data sparsity, we find a principled identity that connects prefix statistics and individual-case regression model $\boldsymbol{h}(\cdot,\cdot)$, and further derive an efficient and stable learning objective.

With neural statistical functions, we achieve accurate and efficient estimation of essential statistics of diverse processes, including accumulated energy in dynamical systems, quantiles of aerodynamic responses, and maximum stress in crash, achieving up to two orders of magnitude lower latency while maintaining comparable accuracy, demonstrating its potential in complex decision-making.

\section{Related Work}

As a new type of neural network, neural statistical functions define a complete training--inference paradigm. While technically distinct, they are conceptually related to several lines of prior work.

\vspace{-5pt}
\paragraph{Integral in neural networks} Integrals appear widely in neural networks as a basic operation in model architectures \cite{chen2018neural,dupont2019augmented} and generative models \cite{ho2020denoising,lu2022dpm,song2023consistency,geng2025mean}. As a pioneering work, Neural ODE~\cite{chen2018neural} reformulates the representation learning in multilayer neural networks as a continuous-time integral, where the neural network learns the derivative of hidden states. Extending the numerical integral from model depth to real-world time, Neural ODE can be applied to simulate dynamic processes~\cite{rubanova2019latent}. Recently, diffusion models have been developed based on multi-step denoising, where the numerical integral is adopted for inference \cite{ho2020denoising,lu2022dpm}. Specifically, Mean Flow \cite{geng2025mean} proposes to directly learn the average velocity with one neural network, which does not rely on numerical integration along the denoising steps. Unlike prior work, our models focus on the statistical estimation over varying conditions and present a general framework for statistics beyond integral.

\vspace{-5pt}
\paragraph{Stochastic analysis} As a representative decision-making task involving varying circumstances, mechanical design commonly uses stochastic analysis to study system behavior under varying conditions. Classical approaches such as stochastic finite element method (FEM)~\cite{stefanou2009stochastic,langlois2016stochastic} provide principled tools for such analysis, but they can be computationally expensive despite developments in acceleration techniques. Although neural networks have been developed as efficient surrogates for numerical methods \cite{jmlr_operator,azizzadenesheli2024neural}, prior work focuses on deterministic methods, where neural networks for stochastic analysis, \emph{e.g.}, stochastic FEM, are still underexplored. Uncertainty quantification for neural networks is also related, as it typically estimates the uncertainty or variance of model predictions for a fixed input~\cite{gal2016dropout}. In contrast, this paper focuses on statistical properties under varying conditions, rather than uncertainty at a single condition.

\section{Neural Statistical Functions}

As formalized in Eq.~\eqref{equ:define}, given $\operatorname{Stat}(\cdot)$ refers to the type of statistical functions and $\boldsymbol{h}(\mathbf{x},c)$ denotes the target quantity on position $\mathbf{x}$ under condition $c$, our goal is to predict the statistics $\boldsymbol{h}_{\text{Stat}}(\mathbf{x},[c_0,c_1])$ corresponding to the condition interval $[c_0,c_1]\subseteq [c_{\min},c_{\max}]\subset\R$. Here, we assume that varying conditions can be projected to a one-dimensional space and users may query different intervals.

Next, we will first extend the prefix view in Eq.~\eqref{equ:integral} to general statistics inference, such as quantile and maxima, where a \emph{prefix statistics framework} is presented. Then, we will derive the training objective of neural statistical functions, which is based on an identity principle of prefix statistics. For main text clarity, we defer all the proofs of propositions and theorems to the Appendix \ref{appdix:proof}.

\vspace{-5pt}
\paragraph{Notation simplification} For clarity, we reparameterize the scalar uncertainty in quantile space. Let
\begin{equation}\label{equ:condition_project}
    s = \cdf(c)\in[0,1],
    \qquad
    c = \icdf(s),
\end{equation}
where $\cdf$ is the cumulative distribution function of the condition $c$'s distribution $\mu$ and $\icdf$ is its inverse quantile map. Under this change of variables, any interval $[c_0,c_1]$ induces a quantile interval
\begin{equation}
    s_0 := \cdf(c_0),
    \qquad
    s_1 := \cdf(c_1),
    \qquad
    m := s_1-s_0.
    \label{eq:interval_mass}
\end{equation}
The quantity $m$ is the probability mass of the interval under $\mu$. Quantile space is a convenient coordinate system because all distributions are mapped to the common domain $[0,1]$, and conditional integrals over $[c_0,c_1]$ become normalized integrals over $[s_0,s_1]$. This allows the learning problem to be expressed in a distribution-normalized way that is independent of the original scale of $c$.

\subsection{Prefix Statistics Framework} 

\begin{wraptable}{r}{0.4\textwidth}
\vspace{-12pt}
\centering
\caption{Unified prefix-statistic framework. Each statistic is defined by a task-specific transformed local signal $\boldsymbol{\psi}$.}
\vspace{-2pt}
\label{tab:framework}
\small
\setlength{\tabcolsep}{5pt}
\begin{tabular}{lcc}
\toprule
{$\boldsymbol{h}_{\text{Stat}}(\mathbf{x},[c_0,c_1])$}  
& {$\boldsymbol{\psi}(\mathbf{x},s)$} 
 \\
\midrule
Mean 
& $\boldsymbol{h}(\mathbf{x},\icdf(s))$ \\
Max 
& $\exp\bigl(\beta\,\boldsymbol{h}(\mathbf{x},\icdf(s))\bigr)$ \\
Quantile (CDF) 
& $\1\!\left\{\boldsymbol{h}(\mathbf{x},\icdf(s)) \le \mathbf{y} \right\}$ \\
\bottomrule
\end{tabular}
\vspace{-10pt}
\end{wraptable}

To support general statistical functions, we built a framework based on the transformed local signal $\boldsymbol{\psi}(\mathbf{x},s)$, whose choice depends on the target statistical function (Table \ref{tab:framework}). 

For a given $\boldsymbol{\psi}$, define its prefix statistics based on integral
\begin{equation}\label{equ:pre_stat}
    \tilde{\boldsymbol{h}}(\mathbf{x},s)
    :=
    \frac{1}{s}\int_0^s \boldsymbol{\psi}(\mathbf{x},t)\,\dd t,
    \qquad s\in(0,1].
\end{equation}
This prefix-statistics construction defines a broader amortizable framework for interval-conditioned prediction. In particular, whenever a target statistic can be expressed as a function of a normalized interval integral,
\begin{equation}\label{equ:infer}
    \boldsymbol{h}_{\text{Stat}}(\mathbf{x},[c_0,c_1])=
\boldsymbol{\Phi}\!\left(
\frac{1}{m}\int_{s_0}^{s_1}\boldsymbol{\psi}(\mathbf{x},s)\,\dd s
\right)=\boldsymbol{\Phi}\!\left(\frac{s_1 \tilde{\boldsymbol{h}}(\mathbf{x},s_1)-s_0 \tilde{\boldsymbol{h}}(\mathbf{x},s_0)}{m}\right),
\ m:=s_1-s_0.
\end{equation}
Thus, the proposed framework is a reusable primitive for interval-conditioned prediction. The interval mean and maximum are direct instances with different choices of the post-processing map $\boldsymbol{\Phi}$, while the quantile case fits this template at the level of the interval cumulative distribution, followed by inversion in the response variable. Next, we list the implementation for several typical statistics.

\underline{\emph{(i) Mean.}} Similar to the usage described in Eq.~\eqref{equ:integral}, mean can be calculated by defining $\boldsymbol{\Phi}$ as identity.

\underline{\emph{(ii) Maxima.}} We adopt a smooth log-sum-exp surrogate to approximate the maxima function:
\begin{equation}\label{eq:max_soft_definition}
\boldsymbol{h}_{\text{Stat}}^\beta(\mathbf{x},[c_0,c_1])
    :=
    \frac{1}{\beta}
    \log\left(
    \frac{1}{m}
    \int_{s_0}^{s_1}
    \exp\bigl(\beta\,h(\mathbf{x},\icdf(s))\bigr)\,\dd s
    \right),
\end{equation}
with temperature parameter $\beta>0$, namely $\boldsymbol{\Phi}(\cdot)=\frac{1}{\beta}\log(\cdot)$. This is a continuous version of entropy-based
smoothing for max-type objectives, which as $\beta\to\infty$, the quantity converges to the interval supremum under mild regularity conditions \citep{nesterov2005smooth}.

\underline{\emph{(iii) $\alpha$-Quantile.}} Here, we consider $\alpha$-quantiles of different condition intervals, where $\alpha\in(0,1)$ be a target probability level. Specifically, the interval-conditioned $\alpha$-quantile of the response is defined as the value $\boldsymbol{h}_{\text{Stat}}^\alpha(\mathbf{x},[c_0,c_1])$, which satisfies the following equality:
\begin{equation}\label{equ:quantile_definition}
    \mathbb{P}\!\left(
    \boldsymbol{h}(\mathbf{x},c)\le \boldsymbol{h}_{\text{Stat}}^\alpha(\mathbf{x},[c_0,c_1])
    \,\middle|\,
    c\in[c_0,c_1]
    \right)
    =
    \alpha.
\end{equation}
Towards a unified framework, we work with the conditional cumulative distribution function (CDF) over response values $\boldsymbol{h}(\cdot,\cdot)$. Let $\mathbf{y}$ denote a candidate point-wise response level and define
\begin{equation}\label{equ:cdf}
    \boldsymbol{\psi}(\mathbf{x},s;\mathbf{y})
    :=
    \1\!\left\{
    \boldsymbol{h}(\mathbf{x},\icdf(s)) \le \mathbf{y}
    \right\},\ \tilde{\boldsymbol{h}}(\mathbf{x},s;\mathbf{y})
    :=
    \frac{1}{s}\int_0^s
    \1\!\left\{
    \boldsymbol{h}(\mathbf{x},\icdf(t)) \le \mathbf{y}
    \right\}\,\dd t.
\end{equation}
Then, the interval-conditioned $\alpha$-quantile can be obtained by inversion:
\begin{equation}\label{equ:quantile}
    \boldsymbol{h}_{\text{Stat}}^\alpha(\mathbf{x},[c_0,c_1])
    :=
    \inf\left\{
    \mathbf{y}:\,
    \frac{1}{m}\int_{s_0}^{s_1}\1\!\left\{
    \boldsymbol{h}(\mathbf{x},\icdf(t)) \le \mathbf{y}
    \right\}\,\dd t\ge \alpha
    \right\}.
\end{equation}
Note that unlike mean and maxima that only need two-time inference of prefix statistic $\tilde{\boldsymbol{h}}$ as described in Eq.~\eqref{equ:infer}, $\alpha$-quantile actually requires multiple inference under different $\mathbf{y}$. Fortunately, since the transformed function $\boldsymbol{\psi}$ is monotonically non-decreasing, we can adopt the classical binary search~\cite{nowak2008generalized} for speedup. Specifically, given $\epsilon$ as the tolerable error, we select $\lceil \frac{\mathbf{y}_{\text{max}}-\mathbf{y}_{\text{min}}}{\epsilon}\rceil$ candidates for $\mathbf{y}$ uniformly-spaced within global minima $\mathbf{y}_{\text{min}}$ and maxima $\mathbf{y}_{\text{max}}$, the complexity is $\mathcal{O}(\log \lceil \frac{\mathbf{y}_{\text{max}}-\mathbf{y}_{\text{min}}}{\epsilon}\rceil)$ under binary search, where around ten evaluations of $\tilde{\boldsymbol{h}}$ are sufficient when
$\frac{(\mathbf{y}_{\text{max}}-\mathbf{y}_{\text{min}})}{\epsilon} \approx 10^3$.

\begin{remark}[Generality of framework]\label{remark:generality}
Beyond $\alpha$-quantile, Eq.~\eqref{equ:cdf} actually learns the interval-conditioned CDF of the response distribution, which can serve as a general primitive for a broad range of distributional statistics. For example,
the probability that the response lies within a target range $[\mathbf{a},\mathbf{b}]$
can be directly computed as follows
\begin{equation}
\begin{aligned}
    \mathbb{P}\!\left(
    \mathbf{a}<\boldsymbol{h}(\mathbf{x},c)\le \mathbf{b}
    \,\middle|\,
    c\in[c_0,c_1]
    \right)
    =
    \frac{
    s_1\tilde{\boldsymbol{h}}(\mathbf{x},s_1;\mathbf{b})
    -
    s_0\tilde{\boldsymbol{h}}(\mathbf{x},s_0;\mathbf{b})
    }{m}
    -
    \frac{
    s_1\tilde{\boldsymbol{h}}(\mathbf{x},s_1;\mathbf{a})
    -
    s_0\tilde{\boldsymbol{h}}(\mathbf{x},s_0;\mathbf{a})
    }{m}.
\end{aligned}
\end{equation}
Similarly, the probability of exceeding a critical threshold $\boldsymbol{\tau}$ is given by
\begin{equation}
    \mathbb{P}\!\left(
    \boldsymbol{h}(\mathbf{x},c)>\boldsymbol{\tau}
    \,\middle|\,
    c\in[c_0,c_1]
    \right)
    =
    1-
    \frac{
    s_1\tilde{\boldsymbol{h}}(\mathbf{x},s_1;\boldsymbol{\tau})
    -
    s_0\tilde{\boldsymbol{h}}(\mathbf{x},s_0;\boldsymbol{\tau})
    }{m}.
\end{equation}
Therefore, although quantile inference requires an additional inversion step,
the learned conditional CDF provides a reusable distributional representation
for many downstream risk-aware statistics.
\end{remark}

\subsection{Model Training}

By introducing prefix statistics (Eq.~\eqref{equ:pre_stat}), we can set the statistics estimation free from computation-intensive sampling. Next, we will detail how to train a neural surrogate for $\tilde{\boldsymbol{h}}(\cdot,\cdot)$.

\vspace{-5pt}
\paragraph{Prefix statistics identity} Differentiating Eq.~\eqref{equ:pre_stat} with respect to $s$ yields the first-order identity
\begin{equation}\label{eq:running_identity}
    s\,\partial_s \tilde{\boldsymbol{h}}(\mathbf{x},s) + \tilde{\boldsymbol{h}}(\mathbf{x},s)
    =
    \boldsymbol{\psi}(\mathbf{x},s),
\end{equation}
which directly relates the prefix statistic to the instantaneous solver response. It converts an interval-conditioned aggregation problem into a local differential constraint in quantiled condition space.

\begin{proposition}[Equivalence of first order identity and prefix statistics]
\label{prop:ode_prefix_equivalence}
Fix $\mathbf{x}$ and suppose $\boldsymbol{\psi}(\mathbf{x},\cdot)\in L^1(0,1]$. Let
$\tilde{\boldsymbol{h}}(\mathbf{x},\cdot)$ be differentiable on $(0,1]$. Then the following are equivalent:
\begin{enumerate}[leftmargin=1.5em]
    \item $\tilde{\boldsymbol{h}}$ satisfies Eq.~\eqref{eq:running_identity} a.e.~$s\in(0,1]$ 
    together with the boundary condition $\lim_{s\to0} s\,\tilde{\boldsymbol{h}}(\mathbf{x},s)=0$;
    \item $\tilde{\boldsymbol{h}}$ admits the prefix-integral representation in Eq.~\eqref{equ:pre_stat}.
\end{enumerate}
\end{proposition}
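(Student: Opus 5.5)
The plan is to work with the auxiliary function $G(s) := s\,\tilde{\boldsymbol{h}}(\mathbf{x},s)$ for fixed $\mathbf{x}$, treating each component of the vector-valued quantities separately so that everything is scalar. Since $\tilde{\boldsymbol{h}}$ is differentiable on $(0,1]$, the product rule shows that $G$ is differentiable everywhere on $(0,1]$ with
\begin{equation*}
G'(s) = s\,\partial_s \tilde{\boldsymbol{h}}(\mathbf{x},s) + \tilde{\boldsymbol{h}}(\mathbf{x},s).
\end{equation*}
Hence Eq.~\eqref{eq:running_identity} holding a.e.\ is the same as $G' = \boldsymbol{\psi}(\mathbf{x},\cdot)$ a.e. The representation in Eq.~\eqref{equ:pre_stat} is the same as $G(s) = \int_0^s \boldsymbol{\psi}(\mathbf{x},t)\,\dd t$. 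The proposition therefore reduces to a fundamental-theorem-of-calculus statement about $G$.

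\textbf{Direction (2)$\Rightarrow$(1).}
\begin{itemize}
\item If $G(s)=\int_0^s\boldsymbol{\psi}\,\dd t$ with $\boldsymbol{\psi}\in L^1$, the Lebesgue differentiation theorem gives $G'(s)=\boldsymbol{\psi}(\mathbf{x},s)$ at almost every $s$. This is Eq.~\eqref{eq:running_identity} a.e.
\item For the boundary condition, $|s\,\tilde{\boldsymbol{h}}(\mathbf{x},s)| = |G(s)| \le \int_0^s |\boldsymbol{\psi}(\mathbf{x},t)|\,\dd t$.
\item The right-hand side tends to $0$ as $s\to 0$ by absolute continuity of the Lebesgue integral (equivalently, by dominated convergence).
\end{itemize}

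\textbf{Direction (1)$\Rightarrow$(2).}
\begin{itemize}
\item Fix $0<\varepsilon<s\le 1$. I want $G(s)-G(\varepsilon)=\int_\varepsilon^s \boldsymbol{\psi}\,\dd t$.
\item Letting $\varepsilon\to0$ then finishes the argument. The boundary condition gives $G(\varepsilon)\to 0$, and integrability of $\boldsymbol{\psi}$ near $0$ gives $\int_\varepsilon^s\boldsymbol{\psi}\to\int_0^s\boldsymbol{\psi}$. Dividing by $s>0$ yields Eq.~\eqref{equ:pre_stat}.
\end{itemize}

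\textbf{Main obstacle.} The crux is the identity on $[\varepsilon,s]$. An a.e.\ derivative alone does not determine a function: the Cantor function has derivative $0$ a.e.\ but is not constant. So I cannot simply integrate the a.e.\ identity. The point is that the hypothesis gives more than a.e.\ information. $G$ is differentiable at \emph{every} point of $[\varepsilon,s]$, and its derivative equals $\boldsymbol{\psi}$ a.e., so $G'\in L^1[\varepsilon,s]$. I will invoke the classical theorem (e.g.\ Rudin, \emph{Real and Complex Analysis}, Thm.~7.21): if $F$ is differentiable at every point of $[a,b]$ and $F'\in L^1[a,b]$, then
\begin{equation*}
F(b)-F(a)=\int_a^b F'(t)\,\dd t.
\end{equation*}
Applying this with $F=G$, $a=\varepsilon$, $b=s$, and replacing $G'$ by $\boldsymbol{\psi}$ (they agree a.e., so the integrals coincide), gives the needed identity. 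Behind that theorem is a Vitali–Carathéodory / lower-semicontinuous majorant argument, or alternatively a Henstock–Kurzweil integrability argument. I would cite it rather than reprove it.

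This step is the only non-routine ingredient. It is also exactly where the everywhere-differentiability assumption on $\tilde{\boldsymbol{h}}$ is used; without it, the implication (1)$\Rightarrow$(2) can fail.
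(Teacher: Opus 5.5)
Your proposal is correct and follows the same route as the paper's proof. Both arguments pass to $s\,\tilde{\boldsymbol{h}}(\mathbf{x},s)$, apply the product rule, obtain the converse direction from the a.e.\ fundamental theorem of calculus and the integrability of $\boldsymbol{\psi}$, integrate over $[u,s]$, and let $u\to 0$ using the boundary condition. Your explicit appeal to the fundamental theorem of calculus for everywhere-differentiable functions with integrable derivative (Rudin, Thm.~7.21) justifies the ``integrating both sides'' step that the paper performs tacitly, so it adds rigor rather than giving a different approach.
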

Proposition~\ref{prop:ode_prefix_equivalence} demonstrates that the first-order identity is not merely a property of the prefix statistic in Eq.~\eqref{equ:pre_stat}; together with the natural boundary condition at $s\to 0$, it uniquely characterizes the desired prefix statistic itself. Thus, learning a function that satisfies Eq.~\eqref{eq:running_identity} is equivalent to learning the targeted prefix-integral representation needed for interval statistics inference.

\vspace{-5pt}
\paragraph{Training objective} We parameterize the prefix statistic with a neural surrogate $\tilde{\boldsymbol{h}}_\theta(\mathbf{x},s)$. Then the first-order identity formalized in Eq.~\eqref{eq:running_identity} naturally derives a learning objective:
\begin{equation}\label{equ:generic_target}
    \mathcal{L}(\theta)
    =
    \E_{s\sim \mathrm{Unif}(0,1]}\left[
    \left\|
    \tilde{\boldsymbol{h}}_\theta(\mathbf{x},s)
    -
    \sg\!\left(\tilde{\boldsymbol{h}}_{\mathrm{tgt}}(\mathbf{x},s)\right)
    \right\|_2^2
    \right],\ \text{where}\ 
    \tilde{\boldsymbol{h}}_{\mathrm{tgt}}(\mathbf{x},s)
    =
    \boldsymbol{\psi}(\mathbf{x},s)
    -
    s\,\partial_s \tilde{\boldsymbol{h}}_\theta(\mathbf{x},s).
\end{equation}
Here $\sg(\cdot)$ denotes stop-gradient. As discussed above, this formulation enforces the first-order identity while avoiding backpropagation through the derivative-dependent target. Note that each training sample requires only a local solver evaluation at $c=\icdf(s)$. The network learns a global family of prefix statistics over quantile space, rather than separate predictors for separate intervals.

\begin{theorem}[Interval statistics inference error]
\label{thm:residual_query_error}
Fix $\mathbf{x}$ and let $\tilde{\boldsymbol{h}}_\theta(\mathbf{x},s)$ be differentiable w.r.t~$s$. Define the residual
    $\boldsymbol{\delta}_\theta(\mathbf{x},s)
    :=
    s\,\partial_s \tilde{\boldsymbol{h}}_\theta(\mathbf{x},s)
    +
    \tilde{\boldsymbol{h}}_\theta(\mathbf{x},s)
    -
    \boldsymbol{\psi}(\mathbf{x},s)$.
Then for any interval $0 \le s_0 < s_1 \le 1$,
\begin{equation}
    \left\|
    \frac{
    s_1 \tilde{\boldsymbol{h}}_\theta(\mathbf{x},s_1)
    -
    s_0 \tilde{\boldsymbol{h}}_\theta(\mathbf{x},s_0)
    }{m}
    -
    \frac{1}{m}\int_{s_0}^{s_1}\boldsymbol{\psi}(\mathbf{x},s)\,\dd s
    \right\|
    \le
    \frac{1}{m}\int_{s_0}^{s_1}
    \left\|
    \boldsymbol{\delta}_\theta(\mathbf{x},s)
    \right\|\,\dd s.
    \label{eq:normalized_residual_error_bound}
\end{equation}
\end{theorem}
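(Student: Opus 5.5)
The plan is to observe that the residual is exactly the derivative of the product $s\,\tilde{\boldsymbol{h}}_\theta(\mathbf{x},s)$ minus the local signal, and then apply the fundamental theorem of calculus on $[s_0,s_1]$. By the product rule, for every $s\in(0,1]$,
\begin{equation*}
    \partial_s\bigl(s\,\tilde{\boldsymbol{h}}_\theta(\mathbf{x},s)\bigr)
    = s\,\partial_s\tilde{\boldsymbol{h}}_\theta(\mathbf{x},s)+\tilde{\boldsymbol{h}}_\theta(\mathbf{x},s)
    = \boldsymbol{\psi}(\mathbf{x},s)+\boldsymbol{\delta}_\theta(\mathbf{x},s).
\end{equation*}
This is the same computation that underlies Eq.~\eqref{eq:running_identity} and Proposition~\ref{prop:ode_prefix_equivalence}, but now applied to the learned surrogate rather than the exact prefix statistic.

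Integrating this identity from $s_0$ to $s_1$ gives
\begin{equation*}
    s_1\tilde{\boldsymbol{h}}_\theta(\mathbf{x},s_1)-s_0\tilde{\boldsymbol{h}}_\theta(\mathbf{x},s_0)
    =\int_{s_0}^{s_1}\boldsymbol{\psi}(\mathbf{x},s)\,\dd s+\int_{s_0}^{s_1}\boldsymbol{\delta}_\theta(\mathbf{x},s)\,\dd s.
\end{equation*}
Dividing by $m=s_1-s_0>0$ shows that the left-hand side of Eq.~\eqref{eq:normalized_residual_error_bound}, before taking the norm, equals $\frac{1}{m}\int_{s_0}^{s_1}\boldsymbol{\delta}_\theta\,\dd s$. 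The bound then follows from the triangle inequality for Bochner integrals, $\|\int \boldsymbol{\delta}_\theta\| \le \int\|\boldsymbol{\delta}_\theta\|$, applied componentwise or in any norm.

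The main obstacle is justifying the fundamental theorem of calculus step. A differentiable function need not have an integrable derivative, and when it does not, FTC can fail. The regularity to be assumed is therefore that $\boldsymbol{\psi}(\mathbf{x},\cdot)\in L^1$, as in Proposition~\ref{prop:ode_prefix_equivalence}, and that $\partial_s\tilde{\boldsymbol{h}}_\theta$ is integrable on $[s_0,s_1]$. Under these assumptions, $\boldsymbol{\delta}_\theta$ is integrable, and the function $g(s)=s\,\tilde{\boldsymbol{h}}_\theta(\mathbf{x},s)$ is everywhere differentiable on $(0,1]$ with $L^1$ derivative. FTC then applies to $g$ on $[s_0,s_1]$; this is the Lebesgue version for everywhere-differentiable functions with integrable derivative.

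The proof must also treat two edge cases separately.
\begin{itemize}
    \item \textbf{Case $s_0=0$.} Interpret $s_0\tilde{\boldsymbol{h}}_\theta(\mathbf{x},s_0)$ as $\lim_{s\to0}s\,\tilde{\boldsymbol{h}}_\theta(\mathbf{x},s)$, which is $0$ for bounded network outputs. Apply FTC on $[\varepsilon,s_1]$ and let $\varepsilon\to0$ using dominated convergence, which is valid because $\boldsymbol{\psi},\boldsymbol{\delta}_\theta\in L^1$.
    \item \textbf{Case $\|\boldsymbol{\delta}_\theta\|$ not integrable.} Here the right-hand side is $+\infty$, so the inequality holds trivially.
\end{itemize}
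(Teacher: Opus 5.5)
Your proof is correct and follows the paper's argument essentially verbatim: write $\boldsymbol{\delta}_\theta=\frac{\dd}{\dd s}\bigl[s\,\tilde{\boldsymbol{h}}_\theta(\mathbf{x},s)\bigr]-\boldsymbol{\psi}(\mathbf{x},s)$ via the product rule, integrate over $[s_0,s_1]$, divide by $m$, and apply the triangle inequality for integrals. Your additional discussion of when the fundamental theorem of calculus applies (the case split on integrability of $\boldsymbol{\delta}_\theta$ and the limiting argument at $s_0=0$) is rigor that the paper's proof leaves implicit, not a different route.
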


Theorem~\ref{thm:residual_query_error} shows that controlling the local residual of Eq.~\eqref{eq:running_identity} is sufficient to control the error of estimated interval queries that calculated from endpoint evaluations of $\tilde{\boldsymbol{h}}_\theta$ as formalized in Eq.~\eqref{equ:infer}.

\begin{proposition}[Narrow-interval sensitivity]
\label{prop:narrow_interval_sensitivity}
Fix $\mathbf{x}$ and let $
\boldsymbol{\epsilon}_\theta(\mathbf{x},s):=\tilde{\boldsymbol{h}}_\theta(\mathbf{x},s)-\tilde{\boldsymbol{h}}(\mathbf{x},s)$,
where $\tilde{\boldsymbol{h}}$ denotes the true prefix statistics. For any interval and $s^\star\in(0,1)$, if $\epsilon_\theta(\mathbf{x},\cdot)$ is differentiable, then
\begin{equation}
\lim_{\substack{s_0,s_1\to s^\star, s_1>s_0}}
\frac{
s_1 \tilde{\boldsymbol{h}}_\theta(\mathbf{x},s_1)-s_0 \tilde{\boldsymbol{h}}_\theta(\mathbf{x},s_0)
}{s_1-s_0}
-
\frac{
s_1 \tilde{\boldsymbol{h}}(\mathbf{x},s_1)-s_0 \tilde{\boldsymbol{h}}(\mathbf{x},s_0)
}{s_1-s_0}
=
\boldsymbol{\epsilon}_\theta(\mathbf{x},s^\star)
+
s^\star \partial_s\boldsymbol{\epsilon}_\theta(\mathbf{x},s^\star).
\label{eq:narrow_interval_limit}
\end{equation}

\begin{figure}[t]
\begin{algorithm}[H]
\caption{Training pipeline of neural statistical functions $\tilde{\boldsymbol{h}}_\theta$.}
\label{alg:hybrid_training}
\begin{algorithmic}[1]
\Require Dataset $\mathcal{D}=\{(\mathbf{x},c,\mathbf{y})\}$, single-sample predictor $\boldsymbol{h}_{\theta_{\text{sample}}}$, finite-difference step $\Delta s$, data loss weight $\lambda_{\text{data}}=0.1$
\For{each training step}
    \State Sample a design $(\mathbf{x},c,\mathbf{y})\sim\mathcal{D}$
    \State Sample $s_{\text{neural}}\sim\mathrm{Unif}(0,1]$ and set 
    $\boldsymbol{\psi}_{\text{neural}}\leftarrow
    \boldsymbol{\psi}\!\left(\boldsymbol{h}_{\theta_{\text{sample}}}(\mathbf{x},\icdf(s_{\text{neural}}))\right)$
    \Comment{\emph{Neural branch}}
    \State Set 
    $s_{\text{data}}\leftarrow\cdf(c)$ and 
    $\boldsymbol{\psi}_{\text{data}}\leftarrow\boldsymbol{\psi}(\mathbf{y})$
    \Comment{\emph{Data branch\ \ \ }}

    \State Define 
    $\mathcal{\tilde{\boldsymbol{h}}_{\mathrm{tgt}}}(\mathbf{x},s,\boldsymbol{\psi})
    =
    \boldsymbol{\psi}
    -
    s\,\partial_s\tilde{\boldsymbol{h}}_\theta(\mathbf{x},s)$ with $\partial_s\tilde{\boldsymbol{h}}_\theta$ estimated by $\Delta s$-step finite differences
    
    \State Compute neural-predictor-based loss\[
    \mathcal{L}_{\text{neural}}
    =
    \big\|
    \tilde{\boldsymbol{h}}_\theta(\mathbf{x},s_{\text{neural}})
    -
    \sg\big(\tilde{\boldsymbol{h}}_{\mathrm{tgt}}(\mathbf{x},s_{\text{neural}},\boldsymbol{\psi}_{\text{neural}})\big)
    \big\|_2^2
    \]
    \State Compute simulation-data-based loss
    \[
    \mathcal{L}_{\text{data}}
    =
    \big\|
    \tilde{\boldsymbol{h}}_\theta(\mathbf{x},s_{\text{data}})
    -
    \sg\big(\tilde{\boldsymbol{h}}_{\mathrm{tgt}}(\mathbf{x},s_{\text{data}},\boldsymbol{\psi}_{\text{data}})\big)
    \big\|_2^2
    \]

    \State Take a gradient step on
    $\mathcal{L}=\mathcal{L}_{\text{neural}}+\lambda_{\text{data}}\mathcal{L}_{\text{data}}$
\EndFor
\end{algorithmic}
\end{algorithm}
\vspace{-20pt}
\end{figure}

\end{proposition}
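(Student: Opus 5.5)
By linearity, the difference of the two difference quotients in Eq.~\eqref{eq:narrow_interval_limit} equals a single difference quotient of the error:
\[
\frac{s_1 \boldsymbol{\epsilon}_\theta(\mathbf{x},s_1)-s_0 \boldsymbol{\epsilon}_\theta(\mathbf{x},s_0)}{s_1-s_0}.
\]
This holds because $\boldsymbol{\epsilon}_\theta=\tilde{\boldsymbol{h}}_\theta-\tilde{\boldsymbol{h}}$ and the map $g\mapsto (s_1g(s_1)-s_0g(s_0))/(s_1-s_0)$ is linear. I would therefore set $\boldsymbol{g}(s):=s\,\boldsymbol{\epsilon}_\theta(\mathbf{x},s)$ and reduce the statement to showing that the two-point difference quotient $(\boldsymbol{g}(s_1)-\boldsymbol{g}(s_0))/(s_1-s_0)$ tends to $\boldsymbol{g}'(s^\star)$ as $s_0,s_1\to s^\star$ with $s_1>s_0$. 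Since $\boldsymbol{\epsilon}_\theta$ is differentiable, the product rule gives $\boldsymbol{g}'(s^\star)=\boldsymbol{\epsilon}_\theta(\mathbf{x},s^\star)+s^\star\partial_s\boldsymbol{\epsilon}_\theta(\mathbf{x},s^\star)$, which is exactly the right-hand side.

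The main obstacle is that the limit is two-sided in both endpoints, not anchored at $s^\star$. When $s_0<s^\star<s_1$ (a straddling interval), mere differentiability at $s^\star$ suffices. Writing $\boldsymbol{g}(s_i)=\boldsymbol{g}(s^\star)+\boldsymbol{g}'(s^\star)(s_i-s^\star)+o(|s_i-s^\star|)$, the quotient equals
\[
\boldsymbol{g}'(s^\star)+\frac{o(|s_1-s^\star|)+o(|s_0-s^\star|)}{s_1-s_0},
\]
and the error term vanishes because $|s_i-s^\star|\le s_1-s_0$. When both endpoints lie on the same side of $s^\star$, this argument breaks down: the classical counterexample $s^2\sin(1/s)$ shows pointwise differentiability is not enough. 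I would handle this case by applying the mean value theorem componentwise on $[s_0,s_1]$, which gives the quotient as $\boldsymbol{g}'(\xi)$ for some $\xi\in(s_0,s_1)$, and then invoke continuity of $\partial_s\boldsymbol{\epsilon}_\theta$ at $s^\star$.

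So I would state explicitly that "differentiable" is read as continuously differentiable near $s^\star$. This is the natural setting for a smooth network minus a prefix integral of an integrable signal (or a.e.\ the Lebesgue-point version). Under that reading, the mean value theorem argument covers every configuration of the endpoints at once, and the straddling case need not be treated separately.

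Finally, I would remark that the identity shows narrow-interval queries amplify derivative errors by a factor of $s^\star$. This connects to Theorem~\ref{thm:residual_query_error}: the limit is precisely the residual difference, since $s\partial_s\tilde{\boldsymbol{h}}+\tilde{\boldsymbol{h}}=\boldsymbol{\psi}$ by Eq.~\eqref{eq:running_identity}, so the right-hand side equals $\boldsymbol{\delta}_\theta(\mathbf{x},s^\star)$ wherever the identity holds pointwise.
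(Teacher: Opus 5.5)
Your proof has the same skeleton as the paper's. Linearity reduces the difference to the quotient of $s\,\boldsymbol{\epsilon}_\theta$; you set $\boldsymbol{g}(s)=s\,\boldsymbol{\epsilon}_\theta(\mathbf{x},s)$; and the product rule gives $\boldsymbol{\epsilon}_\theta(\mathbf{x},s^\star)+s^\star\partial_s\boldsymbol{\epsilon}_\theta(\mathbf{x},s^\star)$.

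The difference is entirely in the limit step. The paper simply asserts that differentiability of $\boldsymbol{g}$ makes $(\boldsymbol{g}(s_1)-\boldsymbol{g}(s_0))/(s_1-s_0)$ converge to $\partial_s\boldsymbol{g}(s^\star)$ as both endpoints move. You correctly observe that this is justified only for intervals anchored at or straddling $s^\star$, where your remainder estimate with $|s_i-s^\star|\le s_1-s_0$ works. Same-side intervals need strict differentiability, which you supply via the componentwise mean value theorem and continuity of $\partial_s\boldsymbol{\epsilon}_\theta$ at $s^\star$.

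Your stronger hypothesis is genuinely necessary, not a convenience. In the scalar case, take a smooth $\boldsymbol{\psi}$, so that $\tilde{\boldsymbol{h}}$ is smooth, and let $\tilde{\boldsymbol{h}}_\theta:=\tilde{\boldsymbol{h}}+\boldsymbol{\epsilon}_\theta$ with $\boldsymbol{\epsilon}_\theta(\mathbf{x},s)=(s-s^\star)^2\sin\bigl(1/(s-s^\star)\bigr)$ and value $0$ at $s^\star$. Then $\boldsymbol{\epsilon}_\theta$ is differentiable on $(0,1)$ and the right-hand side is $0$. However, at $t_n=s^\star+1/((2n+1)\pi)$ one has $\partial_s\boldsymbol{g}(t_n)=t_n$. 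So sufficiently short same-side intervals $[t_n,t_n+\eta_n]$ give quotients tending to $s^\star\neq 0$, while straddling intervals give $0$.

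Hence the proposition as literally stated (mere differentiability, unrestricted endpoints) is false. The paper's argument is complete only under your reading, $C^1$ near $s^\star$, or for anchored or straddling intervals, so your version is the correct one.

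One small caution on a side remark: at a mere Lebesgue point of $\boldsymbol{\psi}$ the prefix integral is differentiable but need not be strictly differentiable. Your parenthetical about Lebesgue points therefore supports only the straddling version, not the general limit.
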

The above proposition shows that small pointwise
approximation error in the prefix statistic does not by itself guarantee accurate reconstruction on narrow intervals. Indeed, the statistics inference error depends on how the prefix statistics error varies across the interval, and in the vanishing-width
limit it approaches $
\boldsymbol{\epsilon}_\theta(\mathbf{x},s)
+
s\,\partial_s\boldsymbol{\epsilon}_\theta(\mathbf{x},s)$.
Equivalently, since the true $\tilde{\boldsymbol{h}}$ satisfies
Eq.~\eqref{eq:running_identity}, we have
\begin{equation}
\boldsymbol{\delta}_\theta(\mathbf{x},s)
=
s\,\partial_s \tilde{\boldsymbol{h}}_\theta(\mathbf{x},s)
+
\tilde{\boldsymbol{h}}_\theta(\mathbf{x},s)
-
\boldsymbol{\psi}(\mathbf{x},s)
=
\boldsymbol{\epsilon}_\theta(\mathbf{x},s)
+
s\,\partial_s\boldsymbol{\epsilon}_\theta(\mathbf{x},s).
\label{eq:residual_as_representation_error}
\end{equation}
Thus, the gradient-matching objective in Eq.~\eqref{equ:generic_target} directly governs narrow-interval statistics.

\vspace{-5pt}
\paragraph{Hybrid supervision $\tilde{\boldsymbol{h}}_{\mathrm{tgt}}$}
The gradient-matching objective in Eq.~\eqref{equ:generic_target} only requires evaluating the locally transformed signal $\boldsymbol{\psi}(\mathbf{x},s)$ at a single quantile point $s\in(0,1]$ per training step. As demonstrated in Algorithm~\ref{alg:hybrid_training}, we adopt a hybrid scheme for practical training, which includes data and neural predictor branches. Specifically, the data supervision directly assigns the ground truth quantity $\mathbf{y}$ as the input to $\boldsymbol{\psi}$ and the neural predictor branch relies on a pretrained single-condition predictor to generate the corresponding estimate. The two regimes are complementary: the pretrained predictor provides dense, continuous coverage of $s$ but inherits predictor bias, while data supervision is unbiased at the observed samples but restricted to the discrete $c_i$ in the dataset. We therefore optimize both losses jointly, using the neural-predictor loss to provide continuous coverage and the data loss, weighted by $\lambda_{\mathrm{data}}$, to anchor training to ground-truth simulations, i.e., $\mathcal{L}=\mathcal{L}_{\mathrm{neural}}+\lambda_{\mathrm{data}}\mathcal{L}_{\mathrm{data}}$.

For the gradient term $\partial_s \tilde{\boldsymbol{h}}_\theta(\mathbf{x},s)$ in the identity loss, we use finite-difference approximation for efficiency, which can avoid the computation-intensive backpropagation process.

\section{Experiments}

\begin{wraptable}{r}{0.5\textwidth}
\vspace{-12pt}
\centering
\caption{Summary of main text experiments.}
\vspace{-2pt}
\label{tab:tasks}
\small
\setlength{\tabcolsep}{3pt}
\begin{tabular}{lcc}
\toprule
Systems
& Targeted Statistics & Varied Condition
 \\
\midrule
2D Dynamics 
& Integral energy & Time \\
Airplane Aero 
& Quantile pressure & Angle of attack \\
Car Crash 
& Maximum stress & Impact angle \\
\bottomrule
\end{tabular}
\vspace{-10pt}
\end{wraptable}

Using the general prefix statistics framework, neural statistical functions support diverse statistical estimation across systems, following the configuration in Table~\ref{tab:framework}, with overall performance summarized in Figure~\ref{fig:teaser}. Due to the context limitation, we present three representative tasks in the main text as listed in Table \ref{tab:tasks}, including temporal integral of energy of 2D dynamics, quantile pressure of airplane surface pressure under different angles of attack and maximum stress of inside elements of a car under various crash angles. Full results can be found in the Appendix \ref{appdix:full_stat}.

\vspace{-5pt}
\paragraph{Implementation details} All experiments are conducted in PyTorch~\citep{Paszke2019PyTorchAI} on NVIDIA V100 32GB GPUs. We train neural statistical functions for 2,000 epochs with AdamW~\citep{loshchilov2017fixing} and OneCycleLR~\citep{smith2019super}. They share the architecture of the corresponding single-condition inference  models: an MLP for 2D dynamics~\cite{chen2018neural} and Transolver~\cite{wu2024Transolver} for aerodynamics and crash. MC experiments are repeated over 5 seeds; curves show the mean and shaded bands indicate one standard error.

\vspace{-5pt}
\paragraph{Ground-truth statistics.}
To evaluate model performance, we construct reference statistics using the following two approaches.
\emph{(i)} Dense inference with single-condition predictors. Specifically, we evaluate the single-condition model 1,000 times, with the condition $c$ sampled from the considered interval. This provides a strong reference for statistical estimation based on single-condition models.
\emph{(ii)} Dense computation with numerical solvers. When available, we run numerical simulations~\cite{openradioss,bekemeyer2025introduction} under different conditions and compute statistics from the resulting dense sweeps. Specifically, for the 2D dynamics task, we directly compute the analytic statistics from classical mechanics. For the car crash task, we simulate 280 samples with uniformly sampled impact angles and compute statistics. For the aerodynamics task~\cite{bekemeyer2025introduction}, since the simulation configuration is not released, we only use approach~(i) as the reference. Approach~(ii) requires hundreds of numerical simulations for each design, which is computationally prohibitive in practical deployment.

\subsection{Mean/Integral of Dynamical Systems}\label{sec:dyn}

\paragraph{Setup}
\begin{wrapfigure}{r}{0.36\textwidth}
    \vspace{-40pt}
    \centering
    \includegraphics[width=0.25\textwidth]{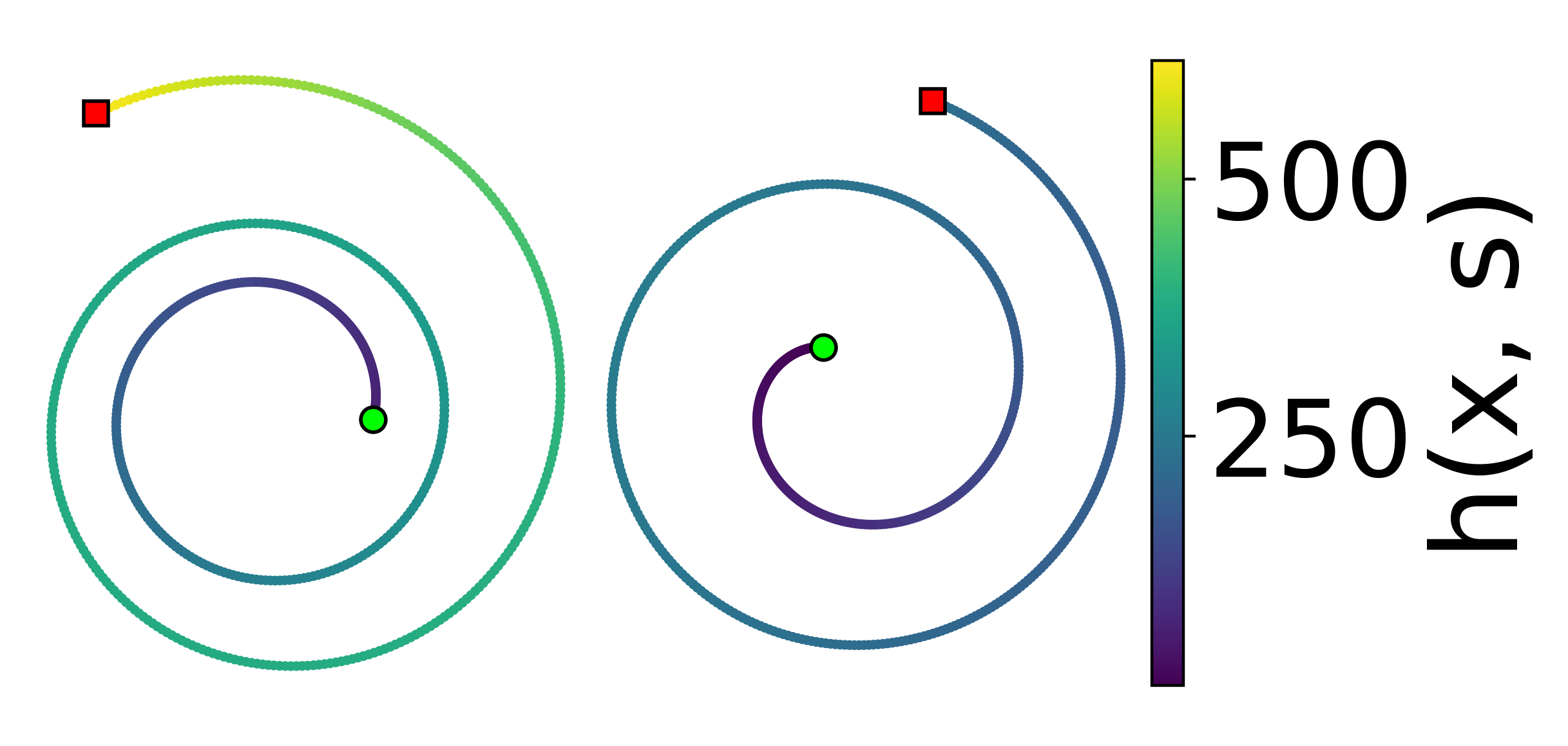}
    \caption{Example trajectories.}
    \label{fig:spiral_sample}
    \vspace{-10pt}
\end{wrapfigure}
We evaluate interval-conditioned mean prediction on a synthetic 2D dynamical system parameterized by a normalized timestamp $s\in[0,1]$. This benchmark contains 2,500 data samples~\cite{chen2018neural}. Each trajectory (Figure \ref{fig:spiral_sample}) is specified by a $7$-dimensional parameter vector $\mathbf{x}=(a,b,\phi_r,\omega_0,\alpha,\phi_\omega,\phi_0)$ describing the system state, which is drawn i.i.d.~at trajectory construction, defining a planar curve $\mathbf{y}(s)=r(s)\bigl(\cos\eta(s),\sin\eta(s)\bigr)$ with radius $r(s)=a+bs+0.1\sin(2\pi s+\phi_r)$ and angular speed $\eta'(s)=\omega_0+2\pi\alpha\cos(2\pi s+\phi_\omega)$. We adopt 2,000 samples for training and the remaining 500 samples with different system parameters for testing. For each case, we consider the mean of instantaneous kinetic-energy, which is $\boldsymbol{h}_{\text{mean}}(\mathbf{x},[s_0, s_1])=\frac{1}{s_1-s_0}\int_{s_0}^{s_1}\boldsymbol{h}(\mathbf{x},s)\dd s=\frac{1}{s_1-s_0}\int_{s_0}^{s_1}r'(s)^2+\bigl(r(s)\eta'(s)\bigr)^2 \dd s$. 

Because the instantaneous quantity $\boldsymbol{h}(\mathbf{x},\cdot)$ admits a closed form, the interval mean can be evaluated analytically. We use this analytic interval mean as the ground-truth reference when reporting absolute accuracy. Besides, we also compute a dense 1,000-sample Monte Carlo estimate based on the single-sample model for each interval, which serves as a high-sample reference for comparison. Both single-sample and neural statistical function $\tilde{\boldsymbol{h}}_\theta$ are instantiated as a $4$-layer SiLU MLP \cite{ramachandran2017searching,shazeer2020glu} with hidden width $256$ and input $\mathbf{x}\oplus s$. For baselines, we compare against limited inference of single-sample model, namely $\frac{1}{K}\sum_{k=1}^{K}\boldsymbol{h}_{\theta_{\text{sample}}}(\mathbf{x},s_k)$ with $s_k\sim\mathrm{Unif}(s_0,s_1), 10\le K \le 400$.

\vspace{-5pt}
\paragraph{$\boldsymbol{\psi}$ implementation} Here $c=s$ is uniform on $[0,1]$, so $\icdf(\cdot)$ is the identity and $\boldsymbol{\psi}(\mathbf{x},s)=\boldsymbol{h}(\mathbf{x},s)$ for mean. Models are tested over intervals $[s_0,s_1]\subseteq[0,1]$ with widths $(s_1-s_0)\in[0.5,\,0.95]$.

\begin{figure}
    \centering
    \includegraphics[width=\linewidth]{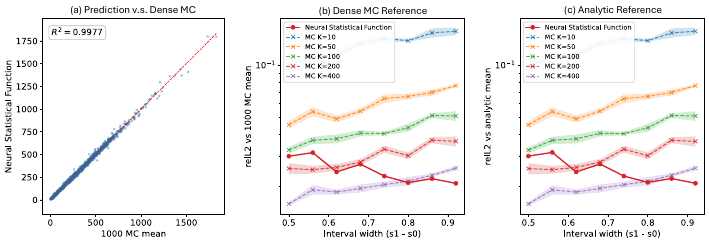}
    \vspace{-10pt}
        \caption{Mean estimation on 2D dynamical systems. 
    (a) Parity plot comparing neural statistical functions' predictions with the $1{,}000$-sample Monte Carlo (MC) reference mean over all test trajectory--interval pairs. 
    (b) Width-binned relative $\ell_2$ error of the neural statistical functions and Monte Carlo baselines, computed with respect to the 1000-sample MC reference mean. 
    (c) Width-binned relative $\ell_2$ error of the same predictions, computed with respect to the analytic interval mean.}
    \label{fig:spiral_mean}
    \vspace{-10pt}
\end{figure}

\vspace{-5pt}
\paragraph{Results}Across both evaluation references, neural statistical functions achieve a clear accuracy-efficiency advantage over Monte Carlo estimation. Using only two endpoint evaluations of $\tilde{\boldsymbol{h}}_\theta$, the proposed method consistently attains lower relative error than finite-sample Monte Carlo baselines, and outperforms even the $K=400$ estimator on wide intervals. This advantage becomes more pronounced as the interval width increases, where the accumulated energy depends on a broader and more complex portion of the trajectory and Monte Carlo estimators require many more samples to reduce variance. In contrast, small-$K$ Monte Carlo estimates remain noticeably noisy, and their accuracy improves only gradually with increasing sample count. The similar trends obtained when evaluating against the dense $1{,}000$-sample Monte Carlo reference and the analytic interval mean in Figures~\ref{fig:spiral_mean}(b,c) further indicate that the dense Monte Carlo reference is sufficiently accurate for practical comparison. More detailed results can be found in Appendix~\ref{app:spiral_diagnostics}.

\subsection{$\alpha$-Quantile of Aerodynamic Systems}\label{sec:aero_sys}

\paragraph{Setup} 
\begin{wrapfigure}{r}{0.4\textwidth}
    \vspace{-35pt}
    \centering
    \includegraphics[width=0.38\textwidth]{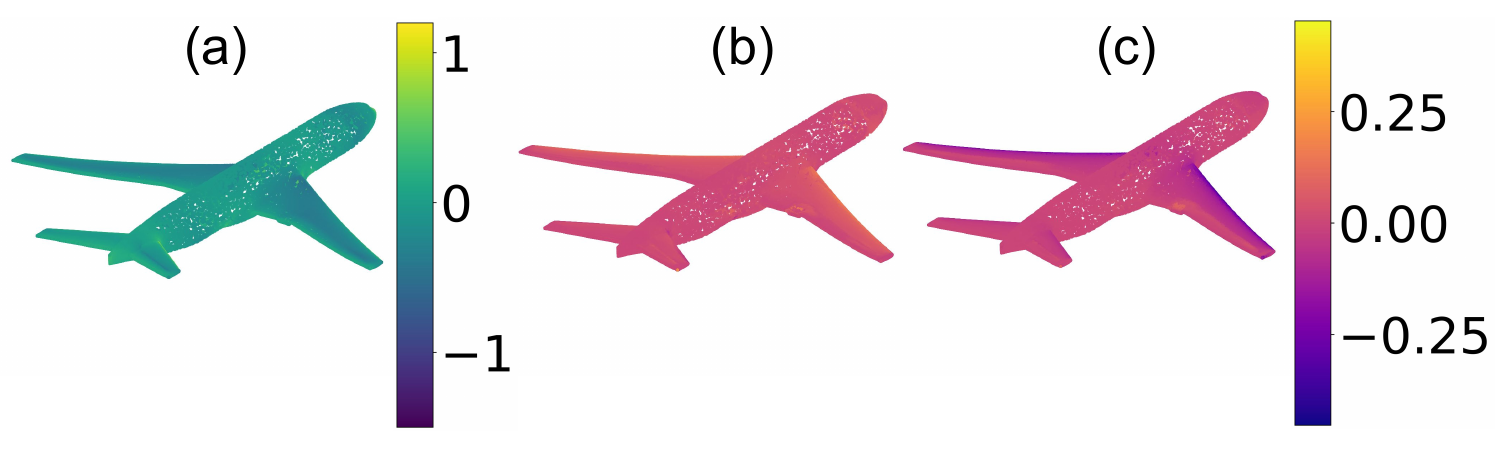}
    \vspace{-3pt}
    \caption{0.9-quantile of point-wise pressure over a 9.8$^\circ$ AoA interval: (a) dense-MC reference, (b) neural statistical function error, and (c) MC ($K=10$) error.}
    \label{fig:aero_sample}
    \vspace{-10pt}
\end{wrapfigure}We evaluate interval-conditioned quantile prediction on the NASA-CRM \cite{bekemeyer2025introduction}, which provides high-quality simulations for flying airplanes. This benchmark contains 149 samples, where each sample is simulated under various angles of attack (AoA) and geometries. The quantity of interest $\boldsymbol{h}(\mathbf{x},c)$ is the point-wise pressure coefficient on the airplane surface $\mathbf{x}$. The uncertain parameter $c$ is AoA over $[c_{\text{min}},c_{\text{max}}]=[-2.5^\circ,7.5^\circ]$, treated as
uniform, and queries are defined over intervals $[c_0,c_1]$ of varying width. We adopt 105 samples for training and the remaining 44 samples with varied geometries for testing. The targeted statistic is the conditional $\alpha$-quantile (Eq.~\eqref{equ:quantile_definition}) for each geometry.

As discussed above, since we cannot access the simulation configuration of this dataset, we adopt a pre-trained Transolver model \cite{wu2024Transolver} with 200-time dense inference to approximate the ground truth statistics. Here Transolver is an established neural simulator, which can produce high-accuracy point-wise simulation. We use the same architecture in training neural statistical function $\tilde{\boldsymbol{h}}_\theta$. 

\vspace{-5pt}
\paragraph{$\boldsymbol{\psi}$ implementation} As shown in Eq.~\eqref{equ:cdf}-\eqref{equ:quantile}, $\boldsymbol{\psi}$ is defined as an indicator function, which is unstable in calculating differentiation. Thus, we smooth $\boldsymbol{\psi}$ with a sigmoid-style approximation:
\begin{equation}
    \boldsymbol{\psi}(\mathbf{x},s;\mathbf{y})=\big(1+\exp\big(-\frac{\mathbf{y}-\boldsymbol{h}(\mathbf{x},\icdf(s))}{\varepsilon}\big)\big)^{-1}\approx \1\!\left\{
    \boldsymbol{h}(\mathbf{x},\icdf(s)) \le \mathbf{y}
    \right\},
\end{equation} and the smoothing width $\varepsilon$ annealed from $0.1$ to $0.01$ during training, in units of one standard deviation of the normalized surface pressure response. In practice, we conduct a five-step binary search for each interval conditioned inference, which involves a 10-time model inference.

\begin{figure}
    \centering

    \includegraphics[width=\linewidth]{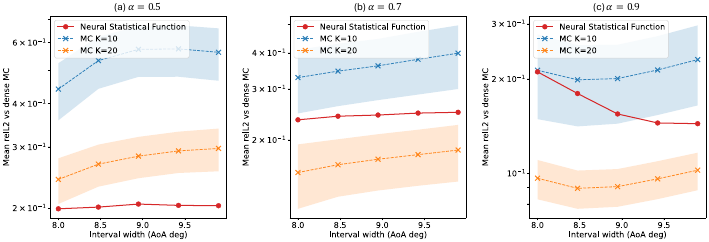}
    \vspace{-10pt}

\caption{Width-binned relative $\ell_2$ error for interval-conditioned pressure quantile prediction on NASA-CRM:
(a) $\alpha=0.5$, (b) $\alpha=0.7$, and (c) $\alpha=0.9$.
Errors are computed against the dense Transolver reference; red denotes neural statistical function and dashed curves denote MC baselines.
}
    \vspace{-10pt}

    \label{fig:aero_quantile}
\end{figure}

\vspace{-5pt}
\paragraph{Results}
As shown in Figure~\ref{fig:aero_quantile}, neural statistical functions consistently improve over the MC baseline with $K=10$, and this advantage becomes more evident on wider AoA intervals, where the queried statistic depends on a broader range of aerodynamic responses and finite-sample MC estimates exhibit larger variance. It is also notable that quantile prediction is much more challenging than mean estimation because the model is expected to learn an interval-conditioned CDF, while our method still brings relative improvement under this task, further verifying the capability of neural statistical functions. Although the $K=20$ benefits from additional samples and remains competitive for some quantile levels, neural statistical functions achieve this with fewer evaluations. 

Additionally, we also provide error visualization in Figure~\ref{fig:aero_sample}, where our model is more accurate and spatially coherent than the MC ($K=10$) error map, especially around complex wing regions where the pressure field changes rapidly. This suggests that neural statistical functions capture structured spatial variation in the conditional quantile more reliably than sparse Monte Carlo sampling.

\subsection{Maximum of Crash Processes}\label{sec:crash}

\paragraph{Setup} 

\begin{wrapfigure}{r}{0.45\textwidth}
    \vspace{-20pt}
    \centering
    \includegraphics[width=0.45\textwidth]{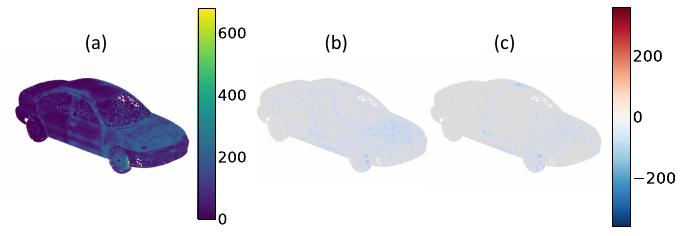}
    \vspace{-15pt}
    \caption{Example maximum stress field over a $100^\circ$ impact angle interval: (a) dense Transolver reference, (b) neural statistical function error, and (c) MC $K=10$ error, both relative to the dense Transolver reference.}
    \label{fig:crash_sample}
    \vspace{-10pt}
\end{wrapfigure}
We evaluate the maximum estimate conditioned on intervals in the Car-Crash
simulated with OpenRadioss \cite{altair_physicsai}. This benchmark contains 280 cases for the industrial-standard National Crash Analysis Center Neon model, where each sample is tested with different impact angles and will present different geometry deformations during crash. Here, the amount of interest $\boldsymbol{h}(\mathbf{x},c)$ is the
element-wise maximum 2D von Mises stress during the crash process and $\mathbf{x}$ denotes the geometry position. The uncertain parameter $c$ is the
impact angle over $[c_{\min},c_{\max}]=[-55^\circ,55^\circ]$, treated as
uniform, and queries are defined over intervals $[c_0,c_1]$ of varying width. 
We adopt 80\% samples (224 cases) for training and the other 20\% samples (56 cases) for testing. The targeted statistic is element-wise maximum stress under different impact angle intervals $[c_0,c_1]$.

As discussed before, we adopt the two approaches to calculate the reference statistics. For dense inference of single-sample predictors, we adopt the 200-sample MC of pre-trained Transolver \cite{wu2024Transolver} to generate a reference, where the same architecture is adopted in training neural statistical functions $\tilde{\boldsymbol{h}}_\theta$. Another reference is calculated from the total 280 numerical simulations for dense evaluation.

\vspace{-5pt}
\paragraph{$\boldsymbol{\psi}$ implementation} As formalized in Eq.~\eqref{eq:max_soft_definition}, we adopt a log-sum-exp formalization \citep{nesterov2005smooth} to approximate the maximum function, which introduces $\beta$ as a smoothing temperature. Specifically, we set $\beta=10$ based on training loss curves to balance training stability and approximation accuracy.

\begin{figure}
    \centering

    \includegraphics[width=\linewidth]{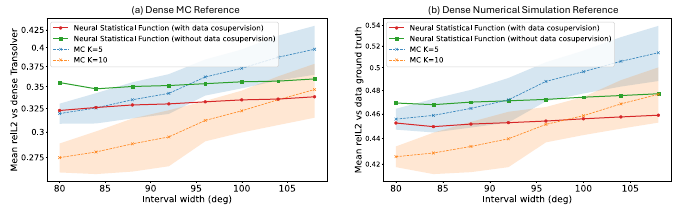}
    \vspace{-10pt}

\caption{Interval-conditioned maximum stress prediction on the Car-Crash. Width-binned relative $\ell_2$ error is reported for neural statistical functions and MC with $K\in\{5,10\}$. 
(a) Errors against the dense Transolver reference. 
(b) Errors against the dense numerical-simulation reference.}
    \vspace{-10pt}
    \label{fig:crash max}
\end{figure}

\vspace{-5pt}
\paragraph{Results}
As shown in Figure~\ref{fig:crash max}(a), neural statistical functions are competitive with or better than the MC $K=10$ baselines while requiring only endpoint evaluations of $\tilde{\boldsymbol{h}}_\theta$. This advantage becomes more evident on wider impact-angle intervals, where the maxima depends on a broader range of crash responses and finite-sample Monte Carlo is not sufficient for this extremely non-linear scenario. 

\vspace{-5pt}
\paragraph{Effect of hybrid supervision} 
As an ablation, we further examine the role of data supervision in model training in Figure~\ref{fig:crash max}(b). Under the ground truth data reference, the neural statistical function with data supervision improves over pure neural predictor supervision, indicating that the additional simulation anchors help align the learned statistic with the real stress envelope observed in the dataset.

\section{Conclusion and Discussion}\label{sec:discussion}

This paper presents neural statistical functions as a new family of neural networks, which can directly infer statistical properties among samples over continuous conditions, thereby significantly reducing computational complexity for stochastic analysis. Specifically, based on the new prefix perspective, we develop a prefix statistics framework to unify diverse statistical functions and further derive an easy-to-optimize learning objective, achieving favorable generality and training efficiency. Neural statistical functions accelerate statistics estimation of complex physics processes by up to 100$\times$, demonstrating strong potential for design and planning that require handling multiple circumstances.

\vspace{-5pt}
\paragraph{Limitations} In this paper, we only experiment on tasks with condition variables that can be represented by a one-dimensional condition variable. However, the prefix statistics framework is also conceptually extensible to multi-dimensional conditions, namely multiple condition variables.

Let $\mathbf{s}=(s^{(1)},\ldots,s^{(d_{\text{cond}})})\in[0,1]^{d_{\text{cond}}}$ denote the marginal quantile
coordinates and define
\begin{equation}
    \tilde{\boldsymbol{h}}(\mathbf{x},\mathbf{s})
    =
    (\nicefrac{1}{\prod_{j=1}^{d_{\text{cond}}} s^{(j)}})
    \int_{0}^{s^{(1)}}\cdots\int_{0}^{s^{({d_{\text{cond}}})}}
    \boldsymbol{\psi}(\mathbf{x},\mathbf{t})\,\dd \mathbf{t}.
\end{equation}
For a box query $[\mathbf{s}_0,\mathbf{s}_1]$, the corresponding interval integral can be recovered by inclusion--exclusion over the $2^{d_{\text{cond}}}$ corners of the box. Therefore, the inference cost increases from two prefix evaluations in the one-dimensional case to $2^{d_{\text{cond}}}$ evaluations in the ${d_{\text{cond}}}$-dimensional case, which is still acceptable for common use cases with $d_{\mathrm{cond}}\leq 4$. It is also notable that, although Monte Carlo sampling is more scalable in high-dimensional settings, it still suffers from sampling noise and slow convergence~\cite{caflisch1998monte,robert2004monte}. Thus, designing sampling-free, low-latency neural statistical functions for extremely high-dimensional condition spaces remains an important open challenge in current research.

\newpage
\bibliographystyle{plainnat}
\bibliography{references}

\newpage
\appendix
\section{Proof of Main Text}\label{appdix:proof}
This section provides proofs for propositions and theorems in the main text.

\subsection{Proof of Proposition~\ref{prop:ode_prefix_equivalence}}
\label{app:proof_ode_prefix_equivalence}

\begin{proof}
We fix $\mathbf{x}$ throughout the proof and omit it from the notation
whenever no confusion arises.

We first show that the prefix-integral representation implies the
first-order identity. Suppose that
\[
    \tilde{\boldsymbol{h}}(\mathbf{x},s)
    =
    \frac{1}{s}\int_0^s \boldsymbol{\psi}(\mathbf{x},t)\,\dd t,
    \qquad s\in(0,1].
\]
Then
\[
    s\,\tilde{\boldsymbol{h}}(\mathbf{x},s)
    =
    \int_0^s \boldsymbol{\psi}(\mathbf{x},t)\,\dd t .
\]
By the fundamental theorem of calculus,
\[
    \frac{\dd}{\dd s}
    \Bigl[
    s\,\tilde{\boldsymbol{h}}(\mathbf{x},s)
    \Bigr]
    =
    \boldsymbol{\psi}(\mathbf{x},s)
    \qquad \text{for a.e. } s\in(0,1].
\]
Using the product rule, we obtain
\[
    s\,\partial_s \tilde{\boldsymbol{h}}(\mathbf{x},s)
    +
    \tilde{\boldsymbol{h}}(\mathbf{x},s)
    =
    \boldsymbol{\psi}(\mathbf{x},s),
\]
which is precisely Eq.~\eqref{eq:running_identity}. Moreover,
\[
    \lim_{s\to 0}
    s\,\tilde{\boldsymbol{h}}(\mathbf{x},s)
    =
    \lim_{s\to 0}
    \int_0^s \boldsymbol{\psi}(\mathbf{x},t)\,\dd t
    =
    0,
\]
where the last equality follows from
$\boldsymbol{\psi}(\mathbf{x},\cdot)\in L^1(0,1]$.

Conversely, suppose that $\tilde{\boldsymbol{h}}$ satisfies
Eq.~\eqref{eq:running_identity} on $(0,1]$ together with the boundary
condition
\[
    \lim_{s\to 0}
    s\,\tilde{\boldsymbol{h}}(\mathbf{x},s)
    =
    0.
\]
By the product rule,
\[
    \frac{\dd}{\dd s}
    \Bigl[
    s\,\tilde{\boldsymbol{h}}(\mathbf{x},s)
    \Bigr]
    =
    s\,\partial_s\tilde{\boldsymbol{h}}(\mathbf{x},s)
    +
    \tilde{\boldsymbol{h}}(\mathbf{x},s).
\]
Using Eq.~\eqref{eq:running_identity}, this gives
\[
    \frac{\dd}{\dd s}
    \Bigl[
    s\,\tilde{\boldsymbol{h}}(\mathbf{x},s)
    \Bigr]
    =
    \boldsymbol{\psi}(\mathbf{x},s).
\]
Integrating both sides from $u$ to $s$, where $0<u<s\le 1$, yields
\[
    s\,\tilde{\boldsymbol{h}}(\mathbf{x},s)
    -
    u\,\tilde{\boldsymbol{h}}(\mathbf{x},u)
    =
    \int_u^s
    \boldsymbol{\psi}(\mathbf{x},t)\,\dd t .
\]
Taking the limit $u\to 0$ and using the boundary condition, we obtain
\[
    s\,\tilde{\boldsymbol{h}}(\mathbf{x},s)
    =
    \int_0^s
    \boldsymbol{\psi}(\mathbf{x},t)\,\dd t .
\]
Dividing both sides by $s$ proves
\[
    \tilde{\boldsymbol{h}}(\mathbf{x},s)
    =
    \frac{1}{s}
    \int_0^s
    \boldsymbol{\psi}(\mathbf{x},t)\,\dd t,
\]
which is Eq.~\eqref{equ:pre_stat}. This completes the proof.
\end{proof}

\subsection{Proof of Theorem~\ref{thm:residual_query_error}}
\label{app:proof_residual_query_error}

\begin{proof}
Fix $\mathbf{x}$ and omit it from the notation when no confusion arises.
By definition, the residual is
\[
    \boldsymbol{\delta}_\theta(\mathbf{x},s)
    =
    s\,\partial_s \tilde{\boldsymbol{h}}_\theta(\mathbf{x},s)
    +
    \tilde{\boldsymbol{h}}_\theta(\mathbf{x},s)
    -
    \boldsymbol{\psi}(\mathbf{x},s).
\]
Using the product rule, we have
\[
    \frac{\dd}{\dd s}
    \Bigl[
    s\,\tilde{\boldsymbol{h}}_\theta(\mathbf{x},s)
    \Bigr]
    =
    s\,\partial_s \tilde{\boldsymbol{h}}_\theta(\mathbf{x},s)
    +
    \tilde{\boldsymbol{h}}_\theta(\mathbf{x},s).
\]
Therefore,
\[
    \boldsymbol{\delta}_\theta(\mathbf{x},s)
    =
    \frac{\dd}{\dd s}
    \Bigl[
    s\,\tilde{\boldsymbol{h}}_\theta(\mathbf{x},s)
    \Bigr]
    -
    \boldsymbol{\psi}(\mathbf{x},s).
\]
Integrating both sides over $[s_0,s_1]$ gives
\[
    \int_{s_0}^{s_1}
    \boldsymbol{\delta}_\theta(\mathbf{x},s)\,\dd s
    =
    \int_{s_0}^{s_1}
    \frac{\dd}{\dd s}
    \Bigl[
    s\,\tilde{\boldsymbol{h}}_\theta(\mathbf{x},s)
    \Bigr]
    \,\dd s
    -
    \int_{s_0}^{s_1}
    \boldsymbol{\psi}(\mathbf{x},s)\,\dd s .
\]
Hence,
\[
    \int_{s_0}^{s_1}
    \boldsymbol{\delta}_\theta(\mathbf{x},s)\,\dd s
    =
    s_1\tilde{\boldsymbol{h}}_\theta(\mathbf{x},s_1)
    -
    s_0\tilde{\boldsymbol{h}}_\theta(\mathbf{x},s_0)
    -
    \int_{s_0}^{s_1}
    \boldsymbol{\psi}(\mathbf{x},s)\,\dd s .
\]
Dividing both sides by $m=s_1-s_0$ yields
\[
    \frac{
    s_1\tilde{\boldsymbol{h}}_\theta(\mathbf{x},s_1)
    -
    s_0\tilde{\boldsymbol{h}}_\theta(\mathbf{x},s_0)
    }{m}
    -
    \frac{1}{m}
    \int_{s_0}^{s_1}
    \boldsymbol{\psi}(\mathbf{x},s)\,\dd s
    =
    \frac{1}{m}
    \int_{s_0}^{s_1}
    \boldsymbol{\delta}_\theta(\mathbf{x},s)\,\dd s .
\]
Taking norms on both sides and applying the triangle inequality gives
\[
    \left\|
    \frac{
    s_1 \tilde{\boldsymbol{h}}_\theta(\mathbf{x},s_1)
    -
    s_0 \tilde{\boldsymbol{h}}_\theta(\mathbf{x},s_0)
    }{m}
    -
    \frac{1}{m}
    \int_{s_0}^{s_1}
    \boldsymbol{\psi}(\mathbf{x},s)\,\dd s
    \right\|
    =
    \left\|
    \frac{1}{m}
    \int_{s_0}^{s_1}
    \boldsymbol{\delta}_\theta(\mathbf{x},s)\,\dd s
    \right\|
\]
and therefore
\[
    \left\|
    \frac{
    s_1 \tilde{\boldsymbol{h}}_\theta(\mathbf{x},s_1)
    -
    s_0 \tilde{\boldsymbol{h}}_\theta(\mathbf{x},s_0)
    }{m}
    -
    \frac{1}{m}
    \int_{s_0}^{s_1}
    \boldsymbol{\psi}(\mathbf{x},s)\,\dd s
    \right\|
    \le
    \frac{1}{m}
    \int_{s_0}^{s_1}
    \left\|
    \boldsymbol{\delta}_\theta(\mathbf{x},s)
    \right\|\,\dd s.
\]
This proves Eq.~\eqref{eq:normalized_residual_error_bound}.
\end{proof}

\subsection{Proof of Proposition \ref{prop:narrow_interval_sensitivity}}
\begin{proof}
Fix $\mathbf{x}$ and omit it from the notation when no confusion arises.
By definition,
\[
    \boldsymbol{\epsilon}_\theta(\mathbf{x},s)
    =
    \tilde{\boldsymbol{h}}_\theta(\mathbf{x},s)
    -
    \tilde{\boldsymbol{h}}(\mathbf{x},s).
\]
Therefore,
\[
\begin{aligned}
\frac{
s_1 \tilde{\boldsymbol{h}}_\theta(\mathbf{x},s_1)
-
s_0 \tilde{\boldsymbol{h}}_\theta(\mathbf{x},s_0)
}{s_1-s_0}
-
\frac{
s_1 \tilde{\boldsymbol{h}}(\mathbf{x},s_1)
-
s_0 \tilde{\boldsymbol{h}}(\mathbf{x},s_0)
}{s_1-s_0}
=
\frac{
s_1 \boldsymbol{\epsilon}_\theta(\mathbf{x},s_1)
-
s_0 \boldsymbol{\epsilon}_\theta(\mathbf{x},s_0)
}{s_1-s_0}.
\end{aligned}
\]
Define
\[
    \boldsymbol{g}_\theta(\mathbf{x},s)
    :=
    s\,\boldsymbol{\epsilon}_\theta(\mathbf{x},s).
\]
Since $\boldsymbol{\epsilon}_\theta(\mathbf{x},\cdot)$ is differentiable,
$\boldsymbol{g}_\theta(\mathbf{x},\cdot)$ is also differentiable on
$(0,1)$. Hence the last term can be written as the difference quotient
\[
    \frac{
    \boldsymbol{g}_\theta(\mathbf{x},s_1)
    -
    \boldsymbol{g}_\theta(\mathbf{x},s_0)
    }{s_1-s_0}.
\]
Taking the limit as $s_0,s_1\to s^\star$ with $s_1>s_0$, we obtain
\[
    \lim_{\substack{s_0,s_1\to s^\star\\ s_1>s_0}}
    \frac{
    \boldsymbol{g}_\theta(\mathbf{x},s_1)
    -
    \boldsymbol{g}_\theta(\mathbf{x},s_0)
    }{s_1-s_0}
    =
    \partial_s \boldsymbol{g}_\theta(\mathbf{x},s^\star).
\]
Using the product rule,
\[
    \partial_s \boldsymbol{g}_\theta(\mathbf{x},s^\star)
    =
    \partial_s
    \Bigl[
    s\,\boldsymbol{\epsilon}_\theta(\mathbf{x},s)
    \Bigr]_{s=s^\star}
    =
    \boldsymbol{\epsilon}_\theta(\mathbf{x},s^\star)
    +
    s^\star
    \partial_s \boldsymbol{\epsilon}_\theta(\mathbf{x},s^\star).
\]
Combining the above identities gives
\[
\lim_{\substack{s_0,s_1\to s^\star,\, s_1>s_0}}
\frac{
s_1 \tilde{\boldsymbol{h}}_\theta(\mathbf{x},s_1)
-
s_0 \tilde{\boldsymbol{h}}_\theta(\mathbf{x},s_0)
}{s_1-s_0}
-
\frac{
s_1 \tilde{\boldsymbol{h}}(\mathbf{x},s_1)
-
s_0 \tilde{\boldsymbol{h}}(\mathbf{x},s_0)
}{s_1-s_0}
=
\boldsymbol{\epsilon}_\theta(\mathbf{x},s^\star)
+
s^\star \partial_s\boldsymbol{\epsilon}_\theta(\mathbf{x},s^\star),
\]
which proves Eq.~\eqref{eq:narrow_interval_limit}.
\end{proof}

\section{More Results}

In this section, we will first present more ablations and model analysis and then provide the quantitative results for the main text figures.

\subsection{Full Statistics for Benchmarks}\label{appdix:full_stat}

As a supplement to Figure \ref{fig:teaser}, we provide the remaining evaluation of all three statistics here. Specifically, the reduction in the number of model inferences is tested under the largest condition interval.

\begin{figure}[htbp]
    \centering
    \includegraphics[width=\linewidth]{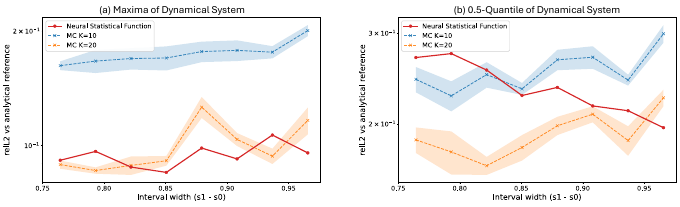}
    \vspace{-10pt}
        \caption{Maxima and $0.5$-Quantile of 2D dynamical system. References are computed analytically from the ground-truth dynamics.}
    \label{fig:aero_appendix}
    \vspace{-10pt}
\end{figure}

\begin{figure}[htbp]
    \centering
    \includegraphics[width=\linewidth]{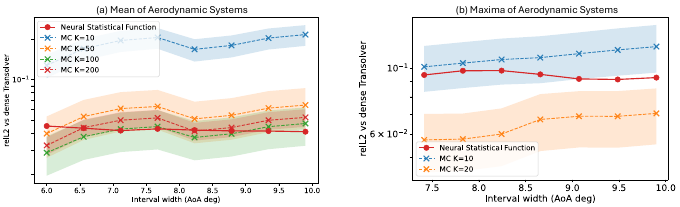}
    \vspace{-10pt}
        \caption{Mean and maxima of aerodynamics. Reference is dense MC of single-condition model.}
    \label{fig:aero_appendix}
    \vspace{-10pt}
\end{figure}

\begin{figure}[htbp]
    \centering
    \includegraphics[width=\linewidth]{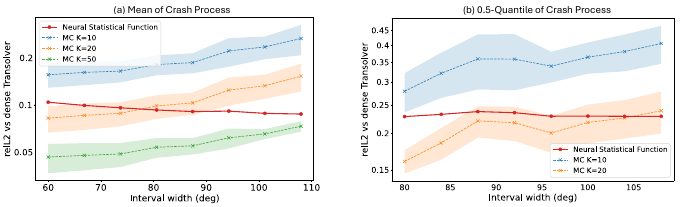}
    \vspace{-10pt}
        \caption{Mean and $0.5$-Quantile of crash. Reference is dense MC of single-condition model.}
    \label{fig:aero_appendix}
    \vspace{-10pt}
\end{figure}

\subsection{Model Analysis}
\label{app:spiral_diagnostics}

\begin{figure*}[t]
    \centering
    \includegraphics[width=\textwidth]{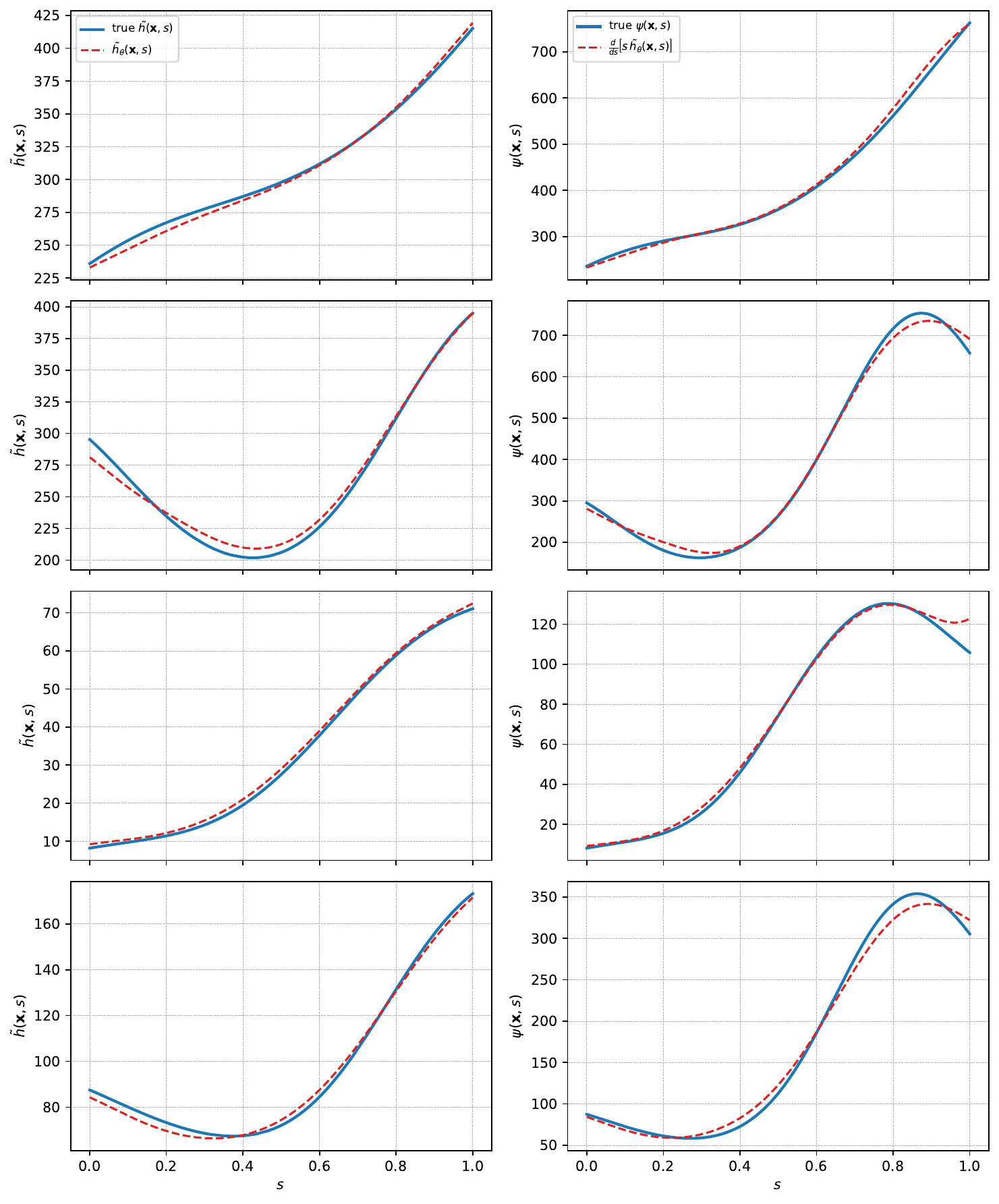}
    \caption{Per-trajectory diagnostics for representative held-out trajectories in the 2D dynamical-system experiment. Each row corresponds to one trajectory. Left: learned prefix statistic $\tilde{\boldsymbol{h}}_\theta(\mathbf{x},s)$ compared with the dense reference $\tilde{\boldsymbol{h}}(\mathbf{x},s)=\frac{1}{s}\int_0^s \boldsymbol{h}(\mathbf{x},t)\,\dd t$. Right: reconstructed local signal $\frac{\dd}{\dd s}[s\,\tilde{\boldsymbol{h}}_\theta(\mathbf{x},s)]$ compared with the true transformed signal $\boldsymbol{\psi}(\mathbf{x},s)=\boldsymbol{h}(\mathbf{x},s)$. Across trajectories, the surrogate tracks both the prefix statistic and the local differential structure closely.}
    \label{fig:spiral_appendix_diagnostics}
\end{figure*}

\paragraph{Per-Trajectory diagnostics for the 2D dynamical system} To further inspect what neural statistical functions learn beyond interval-level error metrics, we visualize per-trajectory diagnostics on several held-out trajectories from the 2D dynamical-system experiment. For each selected trajectory, we compare both the learned prefix statistic and the corresponding reconstructed local signal against their reference values. Specifically, the left column of Figure~\ref{fig:spiral_appendix_diagnostics} shows the learned prefix statistic $\tilde{\boldsymbol{h}}_\theta(\mathbf{x},s)$ against the dense numerical reference
\[
\tilde{\boldsymbol{h}}(\mathbf{x},s)
=
\frac{1}{s}\int_0^s \boldsymbol{h}(\mathbf{x},t)\,\dd t,
\]
while the right column compares the reconstructed local signal
\[
\frac{\dd}{\dd s}\!\left[s\,\tilde{\boldsymbol{h}}_\theta(\mathbf{x},s)\right]
\]
with the true transformed signal $\boldsymbol{\psi}(\mathbf{x},s)=\boldsymbol{h}(\mathbf{x},s)$. The reconstruction is obtained by finite differences in $s$, matching the implementation used during training. These diagnostics complement the interval-query results in Figure~\ref{fig:spiral_mean}. Agreement in the prefix statistic confirms that endpoint evaluations of $\tilde{\boldsymbol{h}}_\theta$ recover interval means accurately, while agreement in the reconstructed local signal verifies that the learned surrogate approximately satisfies the first-order identity in Eq.~\eqref{eq:running_identity}.

\begin{wrapfigure}{r}{0.45\textwidth}
    \vspace{-10pt}
    \centering
    \includegraphics[width=0.45\textwidth]{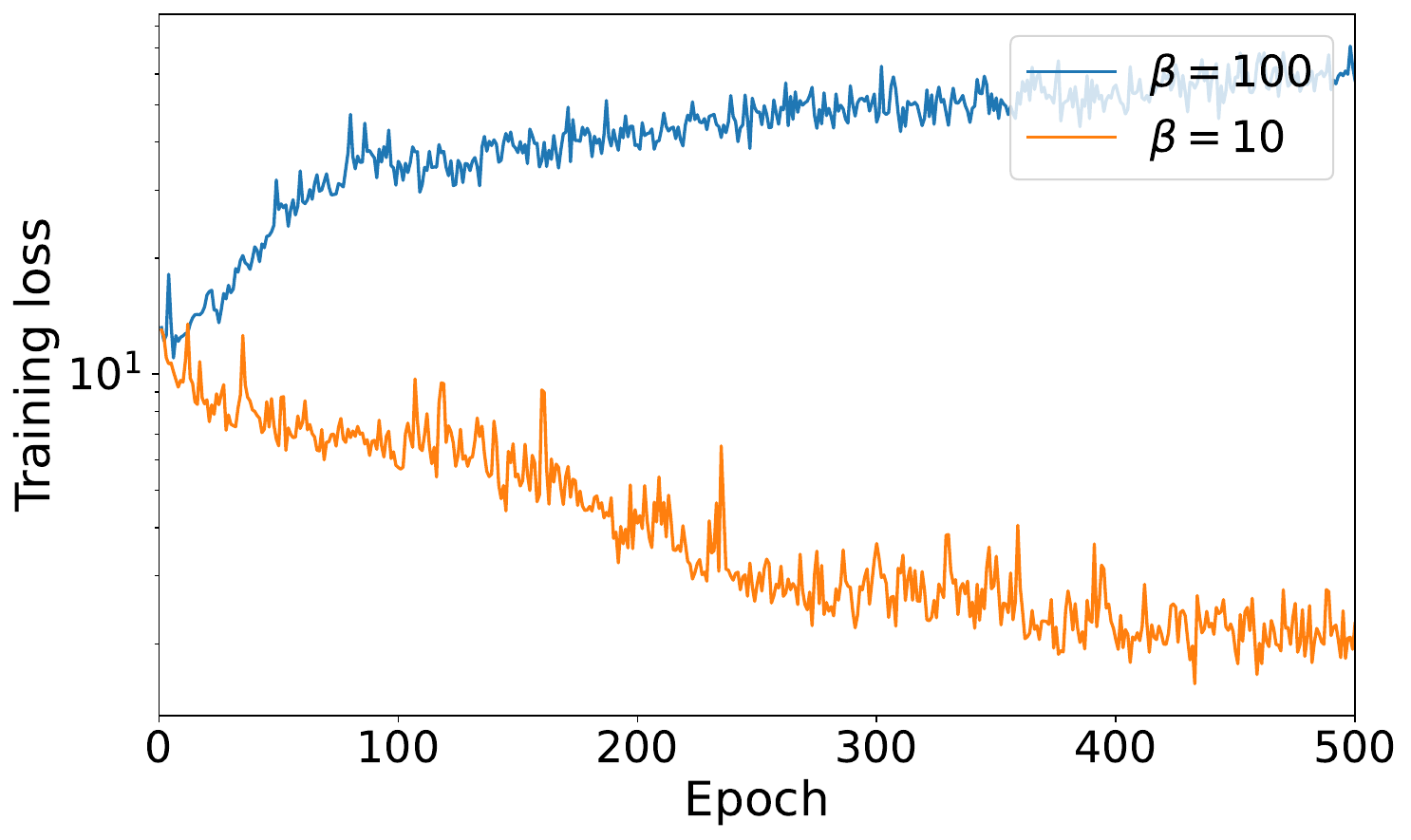}
    \vspace{-15pt}
    \caption{Training loss over the first 500 epochs for $\beta{=}100$ and $\beta{=}10$.}
    \label{fig:beta_ablation_losses}
    \vspace{-10pt}
\end{wrapfigure}
\vspace{-5pt}
\paragraph{Ablations of $\beta$ on Car-Crash maxima} 
The log-sum-exp approximation in Eq.~\eqref{eq:max_soft_definition} introduces a smoothing temperature $\beta$. Increasing $\beta$ reduces the bias between the soft maximum and the hard maximum, but it also amplifies variation in the transformed local signal $\boldsymbol{\psi}(\mathbf{x},s)=\exp(\beta \boldsymbol{h}(\mathbf{x},s))$. In particular, large $\beta$ concentrates the interval statistic on a small subset of high-stress angles, so small errors in the single-sample response or in the finite-difference estimate of $\partial_s\tilde{\boldsymbol{h}}_\theta$ can produce large changes in the identity target. This leads to higher-variance gradients and less stable optimization. In our experiments, large $\beta$ values produced noisier convergence and occasional loss spikes, whereas smaller $\beta$ values were easier to train but yielded an overly smoothed maximum. Based on this trade-off, we set $\beta=10$ for the experiments.

\begin{wrapfigure}{r}{0.45\textwidth}
    \vspace{-10pt}
    \centering
    \includegraphics[width=0.45\textwidth]{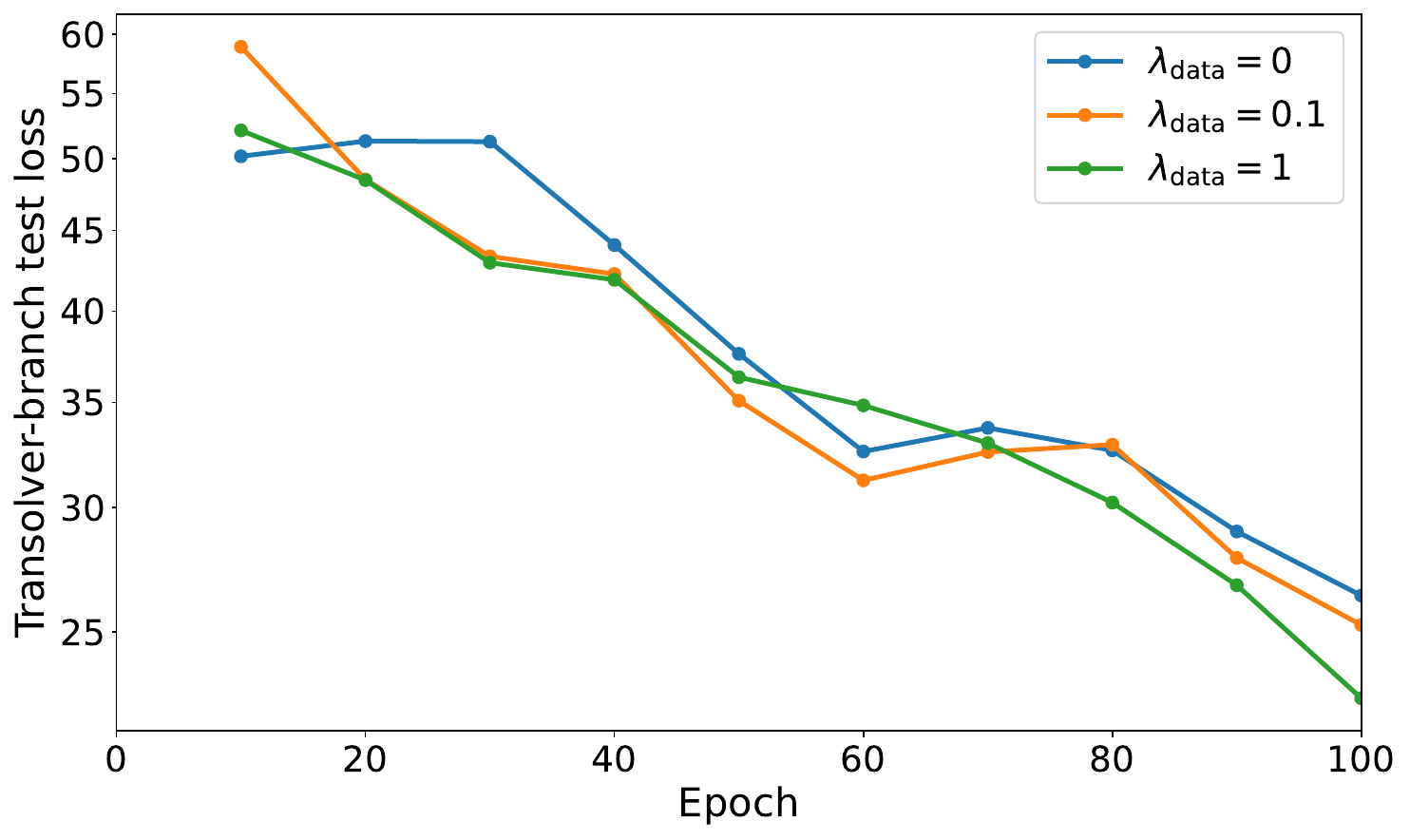}
    \vspace{-15pt}
    \caption{$\mathcal{L}_{\text{neural}}$ on test set over the first 100 epochs for $\lambda_\text{data} = 0$, $\lambda_\text{data}= 0.1$, $\lambda_\text{data} =1$.}
    \label{fig:lamdba}
    \vspace{-10pt}
\end{wrapfigure}
\vspace{-5pt}
\paragraph{Ablations on $\lambda_{\text{data}}$ in hybrid supervision} We use $\lambda_{\text{data}}=0.1$ as the default setting without extensive hyperparameter tuning and observe a clear improvement. To examine the effect of this weight, we conduct an ablation by reporting the Transolver-branch test loss $\mathcal{L}_{\text{neural}}$ when training neural statistical functions for maximum prediction in the crash process, as illustrated in Figure~\ref{fig:lamdba}.

The loss curves show that including the data loss term ($\lambda_{\text{data}}>0$) improves optimization of the neural branch, indicating that the neural and data losses provide complementary supervision. From the loss curve in Figure \ref{fig:lamdba}, we can observe that adjusting $\lambda_{\text{data}}$, \emph{e.g.}, $\lambda_{\text{data}}=1$, may further reduce the test loss and improve the model performance. Since we focus primarily on the methodology in this paper, we leave detailed hyperparameter tuning to future work, where some adaptive hyperparameter tuning techniques \cite{kendall2018multi} can be useful.

\subsection{Quantitative Results}
In Table \ref{tab:integral_count_mc_wfrom0p47}-\ref{tab:crash_max_relL2_vs_width_data_gt}, we present numerical values of plotted main text results for quantitative comparison.

\begin{table}[h]
\centering
\caption{Spiral integral relative L2 ($\mathrm{relL2}$) vs. dense Transolver reference.
Neural statistical function vs.\ MC at $K \in \{10, 50, 100, 200, 400\}$.
MC entries are $\text{mean} \pm 1\,\text{standard error (SE)}$ over 5 seeds.}
\vspace{2pt}
\label{tab:integral_count_mc_wfrom0p47}
\setlength{\tabcolsep}{4pt}
\begin{tabular}{c c c c c c c}
\toprule
$s_1-s_0$ & \textbf{Ours} & MC $K{=}10$ & MC $K{=}50$ & MC $K{=}100$ & MC $K{=}200$ & MC $K{=}400$ \\
\midrule
$0.500$ & $0.028$ & $0.096 \pm 0.002$ & $0.042 \pm 0.001$ & $0.031 \pm 0.001$ & $0.021 \pm 0.001$ & $0.016 \pm 0.001$ \\
$0.560$ & $0.035$ & $0.112 \pm 0.006$ & $0.051 \pm 0.001$ & $0.040 \pm 0.001$ & $0.027 \pm 0.001$ & $0.019 \pm 0.001$ \\
$0.620$ & $0.023$ & $0.109 \pm 0.006$ & $0.051 \pm 0.001$ & $0.038 \pm 0.001$ & $0.026 \pm 0.001$ & $0.020 \pm 0.001$ \\
$0.680$ & $0.021$ & $0.129 \pm 0.005$ & $0.057 \pm 0.003$ & $0.038 \pm 0.002$ & $0.031 \pm 0.002$ & $0.019 \pm 0.001$ \\
$0.740$ & $0.031$ & $0.134 \pm 0.004$ & $0.061 \pm 0.002$ & $0.042 \pm 0.002$ & $0.029 \pm 0.001$ & $0.021 \pm 0.001$ \\
$0.800$ & $0.019$ & $0.141 \pm 0.015$ & $0.065 \pm 0.003$ & $0.048 \pm 0.003$ & $0.033 \pm 0.003$ & $0.022 \pm 0.001$ \\
$0.860$ & $0.019$ & $0.172 \pm 0.015$ & $0.070 \pm 0.002$ & $0.049 \pm 0.002$ & $0.034 \pm 0.001$ & $0.024 \pm 0.001$ \\
$0.920$ & $0.022$ & $0.173 \pm 0.011$ & $0.080 \pm 0.005$ & $0.054 \pm 0.002$ & $0.041 \pm 0.002$ & $0.025 \pm 0.000$ \\
\bottomrule
\end{tabular}
\end{table}
\begin{table}[h]
\centering
\caption{Spiral integral relative L2 ($\mathrm{relL2}$) vs. analytic ground truth.
Neural statistical function vs.\ MC at $K \in \{10, 50, 100, 200, 400\}$.
MC entries are $\text{mean} \pm 1\,\text{standard error}$ over 5 seeds.}
\vspace{2pt}
\label{tab:integral_analytic_wfrom0p47}
\setlength{\tabcolsep}{4pt}
\begin{tabular}{c c c c c c c}
\toprule
$s_1-s_0$ & \textbf{Ours} & MC $K{=}10$ & MC $K{=}50$ & MC $K{=}100$ & MC $K{=}200$ & MC $K{=}400$ \\
\midrule
$0.500$ & $0.028$ & $0.096 \pm 0.002$ & $0.042 \pm 0.001$ & $0.031 \pm 0.001$ & $0.021 \pm 0.001$ & $0.016 \pm 0.001$ \\
$0.560$ & $0.036$ & $0.112 \pm 0.006$ & $0.051 \pm 0.001$ & $0.040 \pm 0.001$ & $0.027 \pm 0.001$ & $0.019 \pm 0.001$ \\
$0.620$ & $0.023$ & $0.109 \pm 0.006$ & $0.051 \pm 0.001$ & $0.038 \pm 0.001$ & $0.026 \pm 0.001$ & $0.020 \pm 0.001$ \\
$0.680$ & $0.021$ & $0.129 \pm 0.005$ & $0.057 \pm 0.003$ & $0.038 \pm 0.002$ & $0.031 \pm 0.002$ & $0.019 \pm 0.001$ \\
$0.740$ & $0.031$ & $0.134 \pm 0.004$ & $0.061 \pm 0.002$ & $0.042 \pm 0.002$ & $0.029 \pm 0.001$ & $0.021 \pm 0.001$ \\
$0.800$ & $0.019$ & $0.141 \pm 0.015$ & $0.065 \pm 0.003$ & $0.048 \pm 0.003$ & $0.033 \pm 0.003$ & $0.022 \pm 0.001$ \\
$0.860$ & $0.019$ & $0.172 \pm 0.015$ & $0.070 \pm 0.002$ & $0.049 \pm 0.002$ & $0.034 \pm 0.001$ & $0.024 \pm 0.001$ \\
$0.920$ & $0.022$ & $0.173 \pm 0.011$ & $0.080 \pm 0.005$ & $0.054 \pm 0.002$ & $0.041 \pm 0.002$ & $0.025 \pm 0.000$ \\
\bottomrule
\end{tabular}
\end{table}

\begin{table}
\centering
\caption{NASA-CRM quantile field $\mathrm{relL2}$ versus dense Transolver reference.
  Neural statistical function vs.\ MC at $K{=}10$ and $K{=}20$ Transolver calls. MC entries are $\text{mean} \pm 1\,\text{SE}$ over 5 seeds.}
  \vspace{2pt}
\label{tab:quantile_line_plot}
\setlength{\tabcolsep}{5pt}
\begin{tabular}{cc ccc}
\toprule
$\alpha$-Quantile & Interval width (deg) & Neural statistical function & MC $K{=}10$ & MC $K{=}20$ \\
\midrule
$0.5$ & $8.000$ & $0.199$ & $0.441 \pm 0.083$ & $0.242 \pm 0.036$ \\
 & $8.475$ & $0.201$ & $0.533 \pm 0.091$ & $0.268 \pm 0.037$ \\
 & $8.950$ & $0.205$ & $0.574 \pm 0.096$ & $0.282 \pm 0.038$ \\
 & $9.425$ & $0.204$ & $0.576 \pm 0.097$ & $0.292 \pm 0.040$ \\
 & $9.900$ & $0.203$ & $0.563 \pm 0.097$ & $0.297 \pm 0.042$ \\
\midrule
$0.7$ & $8.000$ & $0.235$ & $0.328 \pm 0.081$ & $0.154 \pm 0.039$ \\
 & $8.475$ & $0.242$ & $0.346 \pm 0.084$ & $0.164 \pm 0.037$ \\
 & $8.950$ & $0.244$ & $0.361 \pm 0.086$ & $0.172 \pm 0.038$ \\
 & $9.425$ & $0.248$ & $0.380 \pm 0.093$ & $0.178 \pm 0.039$ \\
 & $9.900$ & $0.249$ & $0.399 \pm 0.099$ & $0.185 \pm 0.041$ \\
\midrule
$0.9$ & $8.000$ & $0.211$ & $0.213 \pm 0.064$ & $0.096 \pm 0.014$ \\
 & $8.475$ & $0.180$ & $0.199 \pm 0.058$ & $0.090 \pm 0.013$ \\
 & $8.950$ & $0.155$ & $0.201 \pm 0.057$ & $0.091 \pm 0.012$ \\
 & $9.425$ & $0.145$ & $0.214 \pm 0.060$ & $0.096 \pm 0.013$ \\
 & $9.900$ & $0.144$ & $0.231 \pm 0.066$ & $0.102 \pm 0.014$ \\
\bottomrule
\end{tabular}
\end{table}

\begin{table}
\centering
\caption{Crash max $\mathrm{relL2}$ versus interval width,
  evaluated against dense Transolver reference.
  Neural statistical function with and without data cosupervision are compared with MC at $K{=}5$ and $K{=}10$ solver calls. MC entries are $\text{mean} \pm 1\,\text{SE}$.}
  \vspace{2pt}
\label{tab:crash_max_relL2_vs_width_transolver}
\setlength{\tabcolsep}{3pt}
\begin{tabular}{c cc cc}
\toprule
width (deg) & \textbf{Ours} w/ data supervision & \textbf{Ours} w/o data supervision & MC $K{=}5$ & MC $K{=}10$ \\
\midrule
$80.0$  & $0.323$ & $0.352$ & $0.319 \pm 0.011$ & $0.275 \pm 0.014$ \\
$84.0$  & $0.326$ & $0.337$ & $0.325 \pm 0.017$ & $0.280 \pm 0.020$ \\
$88.0$  & $0.329$ & $0.340$ & $0.335 \pm 0.021$ & $0.288 \pm 0.026$ \\
$92.0$  & $0.330$ & $0.341$ & $0.342 \pm 0.023$ & $0.295 \pm 0.028$ \\
$96.0$  & $0.332$ & $0.343$ & $0.362 \pm 0.022$ & $0.312 \pm 0.021$ \\
$100.0$ & $0.335$ & $0.346$ & $0.373 \pm 0.025$ & $0.322 \pm 0.023$ \\
$104.0$ & $0.335$ & $0.346$ & $0.387 \pm 0.030$ & $0.335 \pm 0.028$ \\
$108.0$ & $0.338$ & $0.349$ & $0.398 \pm 0.033$ & $0.347 \pm 0.032$ \\
\bottomrule
\end{tabular}
\end{table}

\begin{table}
\centering
\caption{Crash max $\mathrm{relL2}$ versus interval width,
  evaluated against ground truth data.
  Neural statistical functions with and without data cosupervision are compared with MC at $K{=}5$ and $K{=}10$ solver calls. MC entries are $\text{mean} \pm 1\,\text{SE}$.}
  \vspace{2pt}
\label{tab:crash_max_relL2_vs_width_data_gt}
  \setlength{\tabcolsep}{2.8pt}
\begin{tabular}{c cc cc}
\toprule
width (deg) & \textbf{Ours} w/ data supervision & \textbf{Ours} w/o data supervision & MC $K{=}5$ & MC $K{=}10$ \\
\midrule
$80.0$  & $0.452$ & $0.458$ & $0.456 \pm 0.009$ & $0.426 \pm 0.008$ \\
$84.0$  & $0.449$ & $0.459$ & $0.459 \pm 0.014$ & $0.428 \pm 0.016$ \\
$88.0$  & $0.451$ & $0.461$ & $0.464 \pm 0.016$ & $0.434 \pm 0.020$ \\
$92.0$  & $0.453$ & $0.462$ & $0.472 \pm 0.019$ & $0.440 \pm 0.022$ \\
$96.0$  & $0.454$ & $0.463$ & $0.488 \pm 0.017$ & $0.451 \pm 0.014$ \\
$100.0$ & $0.456$ & $0.465$ & $0.496 \pm 0.019$ & $0.458 \pm 0.016$ \\
$104.0$ & $0.457$ & $0.466$ & $0.506 \pm 0.023$ & $0.468 \pm 0.021$ \\
$108.0$ & $0.459$ & $0.468$ & $0.514 \pm 0.026$ & $0.476 \pm 0.024$ \\
\bottomrule
\end{tabular}
\end{table}

\section{Implementation Details}\label{appdix:imple}

\paragraph{Evaluation protocol}We evaluate all methods on held-out test cases using width-binned relative $\ell_2$ error. For the 2D dynamics modeling (Section \ref{sec:dyn}), we use eight normalized interval widths in $[0.50,0.92]$ and compare against MC baselines with $K\in\{10,50,100,200,400\}$ using both analytic and dense references. For NASA-CRM (Section \ref{sec:aero_sys}), we evaluate quantile levels $\alpha\in\{0.5,0.7,0.9\}$ over AoA interval widths from $8.0^\circ$ to $9.9^\circ$, with a dense-grid Transolver reference computed from 200 uniformly spaced AoA evaluations and MC baselines at $K\in\{10,20\}$. Neural statistical function quantiles are recovered by five bisection steps per node with smoothing temperature $\varepsilon=10^{-2}$; to improve robustness to local non-monotonicities in the learned CDF, bisection uses a running monotone envelope of the predicted CDF values for bracket updates. For the crash (Section \ref{sec:crash}), we evaluate maximum-stress prediction over impact-angle widths from $80^\circ$ to $108^\circ$, using both dense Transolver and data-derived simulation references, and compare with MC baselines at $K\in\{5,10\}$. All MC results are averaged over five random seeds and reported as mean $\pm$ one standard error.

\vspace{-5pt}
\paragraph{Compute resources} All experiments are run on a single multi-GPU workstation with NVIDIA V100 32GB GPUs. Training uses standard PyTorch DDP \cite{Paszke2019PyTorchAI} for faster wall-clock time, while evaluation is performed on a single GPU. Dense Transolver sweeps to generate the reference data of NASA-CRM and Car-Crash dominate the evaluation cost. No multi-node compute cluster is required.

\section{Broader Impacts}\label{appdix:impact}
This paper presents neural statistical functions as a new family of neural networks, whose design principle can be inspiring for future research. Besides, the proposed model can significantly accelerate statistical estimation over a range of varying conditions. One potential application is industrial design, where conventional neural surrogates typically provide predictions under a specific condition, while neural statistical functions allow engineers to directly estimate distributional statistics under diverse operating scenarios. This capability may reduce design-loop latency, improve uncertainty-aware evaluation, and lower the computational cost of large-scale simulation-based analysis. Beyond industrial design, the proposed framework may also benefit decision-making scenarios that require fast statistical estimates, such as motion planning with total energy estimation.

This paper mainly focuses on the scientific problem and we are fully committed to ensuring ethical considerations are taken into account. Thus, we believe there are no potential ethical risks.

\end{document}